\newcommand{\Reals}{\mathbb R}
\newcommand{\Prob}{\mathbb{P}}
\newcommand{\E}{\mathbb{E}}
\newcommand{\Ind}{\mathbb{I}}
\newcommand{\Var}{\text{Var}}
\newcommand{\Cov}{\text{Cov}}
\newcommand{\cE}{\mathcal{E}}
\newcommand{\cF}{\mathcal{F}}
\newcommand{\cN}{\mathcal{N}}
\newcommand{\cR}{\mathcal{R}}
\newcommand{\MSE}{\text{\normalfont MSE}}
\newcommand{\todo}[1]{  \ifthenelse{\boolean{showcomments}}
	{ \textcolor{red}{(todo: #1)}} {}  }
\newtheorem{theorem}{Theorem}
\numberwithin{theorem}{section}
\newtheorem{lemma}[theorem]{Lemma}
\newtheorem{assumption}{Assumption}
\begin{document}

\title{Nonparametric Matrix Estimation with One-Sided Covariates}
\author{Christina Lee Yu \\
		School of Operations Research and Information Engineering \\
		Cornell University}
\date{}
\maketitle

\begin{abstract}
Consider the task of matrix estimation in which a dataset $X \in \Reals^{n\times m}$ is observed with sparsity $p$, and we would like to estimate $\E[X]$, where $\E[X_{ui}] = f(\alpha_u, \beta_i)$ for some Holder smooth function $f$. We consider the setting where the row covariates $\alpha$ are unobserved yet the column covariates $\beta$ are observed. We provide an algorithm and accompanying analysis which shows that our algorithm improves upon naively estimating each row separately when the number of rows is not too small. Furthermore when the matrix is moderately proportioned, our algorithm achieves the minimax optimal nonparametric rate of an oracle algorithm that knows the row covariates. In simulated experiments we show our algorithm outperforms other baselines in low data regimes.
\end{abstract}

\section{Introduction}

Consider an user-product ratings dataset $X \in \Reals^{n \times m}$ from an online platform.  Let $\cE$ denote the index set of the observed entries, such that $X_{ij}$ is only nonzero for $(i,j) \in \cE$. The dataset is likely sparse as any given user has only interacted with a small subset of the entire product catalog, such that $|\cE| \ll nm$. We may be interested to predict user ratings for products that they haven't yet interacted with, as these predictions could be used for personalized recommendations. In particular we posit a ground truth ratings matrix $F \in \Reals^{n \times m}$, for which $\E[X_{ij}] = F_{ij}$ for observed entries. In addition to the ratings data, we assume access to side information in the form of covariates or features of the columns (but not the rows), which we denote $\{\beta_j\}_{j \in [m]}$. Such information asymmetry could arise due to product information being public, as opposed to user information being anonymized. For example, $\beta_j$ could represent features of the products which are publicly disclosed by the seller, whereas most users do not wish to disclose their personal attributes to the online platform.

Matrix completion, or matrix estimation, refers to the task of estimating $F$ from the noisy dataset $X$. Matrix estimation is a fundamental building block of standard data analysis pipelines in practice as part of the data cleaning stage, as most datasets in reality have measurement noise, mistakes, and missing data. The statistical properties of matrix estimation has been well studied in the context of low-rank models under anonymity, i.e. supposing that the only available data is the matrix $X$ itself, and no further attributes of the rows and columns are known.

In this paper, we additionally consider the impact of having access to one-sided covariate information, which is often available in real-world applications. Previous works in matrix estimation that consider access to side information assume that the side information reveals the row or column subspace of the ground truth matrix. In simple terms, this imposes that the data behaves linearly with respect to the revealed side information. In practice this is unrealistic, and there is often considerable effort put into heuristic feature engineering to generate a large set of functions of the covariates, with the hope that the generated features contain the desired subspace. As such, it is practically relevant to consider models of side information in which the subspace could be a nonlinear of the covariates. 

The results presented in this paper quantify the statistical gain for matrix estimation due to having access to one-side covariate information, under a nonparametric setting where the primary assumption is that the ground truth matrix is smooth with respect to the covariates. In particular, we assume that the ground truth matrix $F$ can be described by a latent function $f$ such that $F_{ij} = f(\alpha_i, \beta_j)$, where $\alpha_i$ are unknown latent features of the rows, and $\beta_j$ are the known column features. This nonparametric setting is a more expressive model class than low rank models. Our results show that there are three data regimes depending on the number of rows relative to the number of datapoints per row. In particular, when the number of rows are few, then estimating each row separately achieves the minimax optimal rate. When the number of rows are sufficiently large, we provide an algorithm which attains a more accurate estimate by sharing data across rows. Furthermore, when the number of rows are not too large, we show that our algorithm matches the minimax optimal rate for the oracle algorithm that has access to both row and column covariates, even though our algorithm does not have access to row covariates. 

\subsection{Related Literature}


The related work spans a wide literature across matrix estimation and nonparametric regression, both old and well-established fields of study. As a result, we do not present a comprehensive summary, but instead highlight the results that are most relevant to understanding the space in which our problem lies. We particularly draw attention to existing lower and upper bounds from nonparametric regression and matrix estimation.

\paragraph{Matrix Estimation.}
Classical sparse matrix estimation, also referred to as matrix completion, assumes that there are no observed covariates, and the only data available is a sparse noisy observation of a ground truth matrix. This problem has been widely studied for the setting when the ground truth matrix is low rank and incoherent, i.e. the latent factors exhibit regularity, and the observations are sampled uniformly across the matrix. Algorithms include nuclear norm minimization \cite{CandesPlan10,CandesTao10}, singular value thresholding \cite{mazumder2010tensor,Chatterjee15}, gradient descent \cite{KeshavanMontanariOh10b, GeLeeMa16}, alternating least squares \cite{jain2013low}, and nearest neighbor \cite{BorgsChayesLeeShah17, borgs2017iterative}. The performance is measured by the mean squared error (MSE) defined as
\[\MSE = \E\left[\frac{1}{mn}\sum_{i \in [n]}\sum_{j \in [m]} (\hat{F}_{ij} - F_{ij})^2\right],\]
where $\hat{F}$ denotes the estimate for the ground truth matrix $F$. For any rank $r$ matrix, low rank matrix completion algorithms produce estimates that converge in MSE as long as the fraction of observed entries $p = \Omega(r\min(n,m)^{-1} \log \min(n,m))$. This was shown to be tight up to polylog factors in \cite{KeshavanMontanariOh10b, CandesPlan10}.

In the low rank setting, there have been a sequence of works that consider additional side information in the form of covariate matrices or similarity based graphs. In inductive matrix completion, the primary assumption is that the covariate matrices reveal the row and column subspace of the ground truth matrix \cite{XuJinZhou13, zhong2015efficient, chiang2015matrix, eftekhari2018weighted, ghassemi2018global, chiang2018using, jain2013provable, lu2016sparse,guo2017convex, bertsimas2020fast, burkina2021inductive}. This implies that the ground truth matrix $F$ can be factored according to $YMZ^T$, where $Y \in \Reals^{n\times r_1}$ and $Z \in \Reals^{m \times r_2}$ are the given covariate matrices. As the unknown parameters reduces to only $r_1 \times r_2$, the given side information significantly reduces the sample complexity of matrix estimation from linear in $\max(n,m)$ to logarithmic in the dimension. Subsequent variations have been considered which allow for partial or noisy subspace information \cite{chiang2015matrix}. Assuming that the observed covariates reveal the subspace is a strong condition that is often not satisfied in practice, as it assumes the data is linearly related to the observed covariates. This assumption entirely bypasses the complexity of feature engineering, where people construct nonlinear functions of the covariates that attempt to heuristically guess features under which the model is linear.

An alternate form of side information has been considered in the form of graph side information or clustering based side information. The key idea of graph reglarized matrix completion is to impose a regularizer that encourages the estimate to be smooth with respect to an underlying graph \cite{zhou2012kernelized,kalofolias2014matrix, rao2015collaborative,yin2015laplacian,yu2016temporal, dong2019preconditioned,dong2021riemannian}. The majority of these works are primarily empirical with limited statistical guarantees. \cite{yu2016temporal,rao2015collaborative} provide bounds on the MSE, however their results show that the sample complexity still scales linearly in $\max(n,m)$, which does not clearly highlight the value or improvement theoretically of incorporating side information. Clustering based approaches use the graph side information to learn clusters, which they use fit a block constant matrix \cite{elmahdy2020matrix}. The results are also limited as they require the number of clusters to be small, and the ground truth matrix to be block constant. 


There has been a limited number of works that also consider a nonparametric model class as we assume in this paper. Under a Lipschitz model, \cite{song2016blind,li2020blind} require a significantly costlier sample complexity of $\min(n, m) \max(n,m)^{1/2}$. The models most relevant to our setting is from the graphon estimation literature, which specifically focuses on the case with binary observations and a symmetric matrix \cite{GaoLuZhou15, GaoLuMaZhou16, KloppTsybakovVerzelen15,Xu17}. When $n = m$, $\alpha_i = \beta_i \in [0,1]^d$, and $f$ is a symmetric $2d$-dimensional $(\lambda,L)$-Holder function, the singular value thresholding estimator achieves
\[\MSE = O((pm)^{-2\lambda/(2\lambda+d)}).\]
This matches the mimimax-optimal nonparametric rate for estimating a $d$-dimensional function given $pn$ observations, which would be the setting of estimating the latent function for a single row given the column covariates using only the datapoints within the row. Under the nonparametric setting, there has not been any existing works that incorporate side information with matrix estimation.


\paragraph{Nonparametric Regression.}
As we assume a nonparametric model, the crux of our algorithm will build upon nonparametric kernel regression. An excellent presentation of results and techniques in nonparametric estimation can be found in \cite{Tsybakov09}, of which we summarize a few key results below. 
Let the function class $\cF(\lambda,L)$ denote all $d$-dimensional $(\lambda,L)$-Holder functions with $\lambda \in (0,1]$ such that for all $x, x' \in [0,1]^d$ and $f \in \cF(\lambda,L)$,
\begin{align}
|f(x) - f(x')| \leq L \|x - x'\|_{\infty}^{\lambda}. \label{eq:Holder_bd}
\end{align}
Let $N$ denote the number of observed datapoints. Define the minimax MSE risk as 
\begin{align*}
 \cR^*_{N,2} &= \inf_{\hat{f}} \sup_{f \in \cF(\lambda,L)} \E_f\left[\int_{[0,1]^d} (\hat{f}(x) - f(x))^2 dx\right]
\end{align*}
The existing lower bounds show that 
\begin{align*}
\cR^*_{n,2} &= \Omega(N^{-2\lambda/(2\lambda + d)})
\end{align*}
These rates are achieved by locally polynomial estimators and thus it is tight. This literature however does not consider the value of sharing data amongst different regression tasks, as considered in our setting. If we performed regression on each row's data separately, then the minimax error rate would be $(pm)^{-2\lambda/(2\lambda+d)}$, as the number of datapoints in a given row is $N = pm$.

The technical aspects of our algorithm and proof rely upon the task of estimating the $L_2$ distance between two functions $f$ and $f'$ given observations from both. In particular we use the results from \cite{LepskiNemirovskiSpokoiny99} which shows that the minimax optimal rate for estimating $\|f\|_2$ with $N$ observations is 
\begin{align*}
\E[(\|\hat{f}\|_2 - \|f\|_2)^2] \asymp \max(N^{-4\lambda/(4\lambda + d)}, N^{-1/2}),
\end{align*}
which is faster than the previously stated estimation rate of $N^{-2\lambda/(2\lambda + d)}$. This is the inspiration of our algorithm, as we use the a similar procedure to estimate distance between a pair of rows.

\paragraph{Transfer Learning.}
Matrix estimation with one-sided covariates is also related to transfer learning. Given covariate features associated to columns, the task of prediction for a single row reduces to standard supervised learning or regression, as we can simply fit a function of the covariates to the observed data in this row. In our motivating example, this would be equivalent to separately learning for each user a personalized function mapping from covariates to a predicted rating, which is trained from the data collected from the user itself. A natural question is whether there is value to be gained from sharing data, i.e. using data from user $j$ to predict the rating function of user $i$.

Transfer learning refers to the challenge of learning from data in task A and ``transferring that knowledge'' to perform well in task B. The guarantees in the literature all rely on some minimal similarity between tasks A and B. This is expected as we should not expect to learn task B from training on task A if they are very different. In addition, the typical scenario of transfer learning assumes a very small number of tasks and asymmetric data requirements where there is a rich dataset for task A but a limited dataset for task B. 

The setting we consider is different as the number of ``tasks'' can be large as it corresponds to the number of rows, e.g. users in the platform. As such we shouldn't expect that all the rows, i.e. users, are similar. Due to anonymization of the users, we do not have a priori knowledge of how the ratings functions associated to different users are related to one another. Our algorithm addresses the challenge of determining which rows are similar to each other. 
\subsection{Contributions}

We present an algorithm for nonparametric matrix estimation exploiting one-sided covariate knowledge. The algorithm uses the dataset to estimate distances between rows, and subsequently computes nearest neighbor estimates using the estimated row distances and the column covariates to determine the nearest neighbors.

We provide theoretical guarantees for our algorithm. When the matrix is short and fat, in particular $n = O((mp)^{d_1/(2\lambda+d_2)})$, then the optimal estimator is to simply estimate each row separately. In this regime, the bias introduced by incorporating data from another row is larger than the accuracy to which a single row can already be estimated.

In a moderate regime when $n = \omega((mp)^{d_1/(2\lambda+d_2)})$ and $n = O\left(\left(mp\right)^{\min\left((2\lambda + d_1)/d_2, (2d_1 + d_2)/(4\lambda + d2)\right)}\right)$, our proposed algorithm matches the minimax-optimal nonparametric rate of the oracle algorithm which is given access to the row covariates. This implies that we in fact lose very little by only knowing the column covariates and not the row covariates as our algorithm performs as well as if it knew the row covariates. 

When the matrix is tall and narrow, in particular $n = \omega\left(\left(mp\right)^{\min\left((2\lambda + d_1)/d_2, (2d_1 + d_2)/(4\lambda + d2)\right)}\right)$, we show that our proposed algorithm still outperforms naive regression on each row separately, however it does not match the oracle algorithm that has knowledge of the row covariates. In this setting, the relatively few observations per row limits the accuracy to which the distance between rows can be estimated. Our algorithm improves upon the naive approach of estimating each row separately, as the distance between rows can be estimated more efficiently than estimating the full function for each row separately. The algorithm is limited by its ability to estimate distances between rows, so that the oracle algorithm which exploits row covariates will outperform when $n$ is large.

We provide simulations that show our algorithm outperforms other existing baselines, even compared to low rank matrix estimation algorithms on low rank data itself. Our empirical results highlight that in sparse data regimes, even when the true model is actually low rank, our algorithm which utilizes nonlinear side information at a cost of considering the larger class of nonparametric models outperforms low rank matrix completion algorithms.
\section{Model}

Consider a dataset consisting of a sparse data matrix $X \in \Reals^{n\times m}$ and observed column covariates $\{\beta_i\}_{i \in [n]}$. The goal is to estimate a ground truth matrix $F$ given the data matrix and observed covariates. We make the following assumptions on the data generating model.

\begin{assumption}[Row and column covariates]
Each row $u \in [n]$ is associated with a latent covariate $\alpha_u \in [0,1]^{d_1}$, and each column $i \in [m]$ is associated with an observed covariate $\beta_i \in [0,1]^{d_2}$. These covariates are sampled independently uniformly on the specified unit hypercubes, $\alpha_u \sim U([0,1]^{d_1})$ and $\beta_i \sim U([0,1]^{d_2})$. The column covariates $\{\beta_i\}_{i \in [m]}$ are observed, but the row covariates $\{\alpha_u\}_{u \in [n]}$ are not known.
\end{assumption}

\begin{assumption}[Gaussian observation noise]
Each observed datapoint $X_{ui}$ is a noisy signal of a ground truth function $f$, perturbed with additive Gaussian noise, 
\begin{align*}
X_{ui} = f(\alpha_u, \beta_i) + \epsilon_{ui},
\end{align*}    
where $\epsilon_{ui} \sim N(0,\sigma^2)$ are independent mean-zero Gaussian noise terms. For $F \in \Reals^{n \times m}$ denoting the ground truth matrix, it follows that $\E[X_{ui}] = F_{ui} = f(\alpha_u, \beta_i)$. 
\end{assumption}

\begin{assumption}[Smoothness of latent function]
The latent function $f$ is an $(\lambda,L)$-Holder function with $\lambda \in (0,1]$. For $\lambda = 1$, $f$ is $L$-Lipschitz. 
\end{assumption}

\begin{assumption}[Uniform Bernoulli sampling]
Each entry is observed independently with probability $p$. For $\cE$ denoting the set of observed indices, each index pair $(u,i) \in \cE$ with probability $p$. We overload notation and also let $\cE_{ui}$ denote the indicator function $\Ind((u,i) \in \cE)$.
\end{assumption}



We assume a nonparametric model, where the ground truth matrix is described by a latent function $f$. This is in contrast to the majority of the literature which assumes a low rank model. Any Lipchitz function can be approximated by an approximately low rank matrix \cite{Chatterjee15, UdellTownsend17, Xu17}. Any rank $r$ matrix can be described with the inner product function computed over $r$ dimensional latent feature spaces, which also exhibits Lipschitzness. When it comes to side information however, our model is more realistic as we allow for nonlinear relationships between the side information and the observed matrix data. In particular, the attempts to incorporate side information to low rank models require the side information to reveal the latent subspace, which is a significantly stronger assumption than mildly assuming smoothness as in our model.

\section{Algorithm}

Assume that we have two freshly sampled sets of observations, $\cE'$ used for learning the row distances, and $\cE''$ used for generating the final prediction. We can simply take the dataset and partition it uniformly at random into these two subsets with equal probability. For simplicity of analysis we will assume that we have in fact two freshly sampled datasets, however the result should hold for a single dataset partitioned via uniformly random sample splitting using a slight modification of the presented argument. In a practical implementation of our algorithm in the experiments section, we do not sample split but simply use the same samples for each part of the algorithm. Empirically our algorithm still performs well without sample splitting.

If we knew the row latent variables $\alpha$ in addition to the column latent variables $\beta$, then we can simply use any nonparametric regression estimator such as kernel regression to match the minimax optimal rates. The idea of kernel regression is simple; estimate the value of the target function at $(\alpha,\beta)$ using a weighted average of the datapoints, where higher weights are given to nearby or similar datapoints, as determined by the kernel. For simplicity we consider a rectangular kernel, which gives equal weight to all datapoints for which the corresponding $(\alpha',\beta')$ are at distance no more than a specified threshold. This is also equivalent to a fixed threshold nearest neighbor algorithm, where the nearest neighbor set is defined by the distances in the latent space. 

In our problem setting, we do not have knowledge of $\alpha$. Instead we propose an algorithm that uses data to estimate a proxy for distance between the rows, which is then used to determine the set of nearest neighbors used to construct the final estimates. As a result, the crux of the algorithm and resulting analysis is to make sure that the estimated distances are estimated closely enough to add value to the final estimates with respect to the bias variance tradeoff. We construct distances that approximate the $L_2$ difference in the latent functional space, evaluated with respect to the column latent variables, given by
\[d^2(u,v) = \frac{1}{m} \sum_{l \in [m]} \left( f(\alpha_u, \beta_l) - f(\alpha_v, \beta_l)\right)^2.\]
Since we don't have access to the latent function $f$, we instead estimate the function $f(\alpha_u,\cdot)$ associated to row $u$ using the data in row $u$ itself. While any nonparametric estimator could be used, we use the Nadaraya-Watson estimator with a rectangular kernel with respect to the infinity norm for ease of analysis \cite{Tsybakov09}. The same results will also likely hold for other standard kernels such as the Gaussian kernel. One could also consider local polynomial estimators, which would exploit higher order smoothness for $\lambda > 1$.

Our algorithm has three steps, which we detail below.

\paragraph{Step 1: Initial row latent function estimates.} 
For each row $u$, compute $\hat{f}(u,i)$ to approximate $f(\alpha_u,\beta_i)$ via the Nadaraya-Watson estimator on row $u$'s data, according to 
\begin{align*}
\hat{f}(u,i) &= \frac{1}{W_{ui}}\sum_{j \in [m]} \cE'_{uj} X_{uj} K\left(\tfrac{\beta_i - \beta_j}{h}\right)  \\
\quad\text{ for }\quad
&W_{ui} = \sum_{j \in [m]} \cE'_{uj} K\left(\tfrac{\beta_i - \beta_i}{h}\right),
\end{align*}
with bandwidth $h$ and kernel function $K(b) = \Ind(\|b\| \leq \frac12)$, where $\|\cdot\|$ denotes the infinity norm, $\|b\| = \max_l |b_l|$. 

\paragraph{Step 2: Pairwise row distance estimates.} For each pair of rows $u$ and $v$, compute $\hat{d}^2(u,v)$ to approximate $d^2(u,v)$ by comparing $\hat{f}(u,i)$ and $\hat{f}(v,i)$ across all $i \in [m]$, according to 
\begin{align*}
\hat{d}^2(u,v) = \frac{1}{m}\sum_{i \in [m]}(\hat{f}(u, i) - \hat{f}(v, i))^2 - \xi^2_{uv},
\end{align*}
for $\xi^2_{uv}$ chosen as below to offset the bias that arises from the squared terms involving the observation noise,
\begin{align*}
\xi^2_{uv} := \frac{\sigma^2 }{m} \sum_{i \in [m]} \sum_{l \in [m]} \left(\tfrac{\cE'_{ul}}{W_{ui}^2} + \tfrac{\cE'_{vl}}{W_{vi}^2}\right) K^2\left(\tfrac{\beta_l - \beta_i}{h}\right).
\end{align*}

\paragraph{Step 3: Nearest neighbor estimates.} For each index pair $(u,i)$, estimate $F_{ui}$ using a fixed radius nearest neighbor estimator,
\begin{align*}
\hat{F}_{ui} = \frac{\sum_{v \in \cN_1(u,\eta_1)} \sum_{j\in \cN_2(i,\eta_2)} X_{vj} \cE''_{vj}}{\sum_{v \in \cN_1(u,\eta_1)} \sum_{j\in \cN_2(i,\eta_2)} X_{vj} \cE''_{vj}},
\end{align*}
where the neighborhood sets are defined as
\begin{align*}
\cN_1(u,\eta_1) &:= \{v \in [n]: \hat{d}^2(u,v) \leq \eta_1^2\} \\
\cN_2(i,\eta_2) &:= \{j \in [m]: \|\beta_i -\beta_j\| \leq \eta_2\}
\end{align*}
for some chosen thresholds $\eta_1, \eta_2$.

Our algorithm has three tuning parameters, $h$, $\eta_1$, and $\eta_2$. The naive computational complexity bounds are $O(m |\cE|) = O(n m^2)$ for Step 1, $O(n^2 m)$ for Step 2, and $O(nm |\cE|) = O(n^2 m^2)$ for Step 3. The most costly step is computing the nearest neighbor estimates. This can be improved computationally via approximate nearest neighbor algorithms, or by computing a block constant estimate resulting from clustering using the distances. The same performance guarantees can be achieved with a reduced computational complexity by choosing the number of blocks or clusters appropriately.
\section{Theoretical Guarantees}

We quantify the gain due to side information by bounding the mean squared error achieved by our algorithm relative to naive row regression. When there are very few rows, i.e. $n = O((mp)^{d_1/(2\lambda+d_2)})$, then estimating each row separately using classical regression techniques will obtain the minimax rate of 
\[\MSE = O((mp)^{-2\lambda/(2\lambda + d_2)}).\]
Note that in this setting, even if the row covariates $\alpha$ were observed, the achieved MSE from performing regression on the full matrix dataset would be $O((pmn)^{-2\lambda/(2\lambda + d_1 + d_2)}$, which is worse than the MSE achieved from estimating on each row separately, as it ignores the additional structure in a matrix dataset which enforces that the covariates of all the datapoints are aligned along a grid corresponding to the rows and columns. This observation highlights that when there are only a few rows, each row is sufficiently different such that the bias from sharing data outweighs the benefits of variance reduction. As a result, our algorithm focuses on the regime where $n = \omega((mp)^{d_1/(2\lambda+d_2)})$.

We will choose the bandwidth of our initial row regression estimates according to
\[h = \Theta\left(\left(\frac{pm}{\log mn}\right)^{-\min(1/d_2, 2/(d_2+ 4\lambda))}\right).\]
\begin{theorem} \label{thm:main}
For  $n = \omega((mp)^{d_1/(2\lambda + d_2)})$ and $n = O\left(\left(mp\right)^{\min\left((2\lambda + d_1)/d_2, (2d_1 + d_2)/(4\lambda + d2)\right)}\right)$, our algorithm with $\eta_1 = \eta_2^{\lambda} = (pnm)^{-\lambda/(2\lambda + d_1 + d_2)}$ achieves rate 
\[\MSE = O\left((pmn)^{-2\lambda/(2 \lambda + d_1 + d_2)}\right).\] 
For $n = \omega\left(\left(mp\right)^{\min\left((2\lambda + d_1)/d_2, (2d_1 + d_2)/(4\lambda + d2)\right)}\right)$ our algorithm with $\eta_1 = 2h^{\lambda}$, $\eta_2 = h$ achieves rate
\[\MSE = O\left(\left(\frac{pm}{\log mn}\right)^{-\min(2\lambda/d_2, 4\lambda/(d_2+ 4\lambda))}\right).\]
\end{theorem}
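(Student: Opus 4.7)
The plan is to analyze the three steps of the algorithm in sequence. For Step~1, I would write $\hat f(u,i)=f(\alpha_u,\beta_i)+b_{ui}+n_{ui}$, where $b_{ui}$ is the deterministic kernel bias and $n_{ui}=W_{ui}^{-1}\sum_j\cE'_{uj}\varepsilon_{uj}K((\beta_j-\beta_i)/h)$ is the noise contribution. Holder smoothness gives $|b_{ui}|\le Lh^{\lambda}$ whenever the kernel neighborhood is nonempty, and a Chernoff bound shows that $W_{ui}$ concentrates around $pmh^{d_2}$, so that conditionally on the sampling pattern $n_{ui}$ is mean-zero subgaussian with variance of order $\sigma^2/(pmh^{d_2})$. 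A union bound over all $nm$ entries yields these bounds uniformly once $pmh^{d_2}\gtrsim \log(nm)$, which is enforced by the stated choice of $h$.

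Step~2 is the crux of the argument and follows the Lepski--Nemirovski--Spokoiny strategy for estimating a squared $L_2$ distance at a rate faster than the pointwise regression rate. Setting $\Delta_{ui}=b_{ui}+n_{ui}$, I would expand
\begin{align*}
\tfrac{1}{m}\sum_i (\hat f(u,i)-\hat f(v,i))^2 &= d^2(u,v) + \tfrac{2}{m}\sum_i\bigl(f(\alpha_u,\beta_i)-f(\alpha_v,\beta_i)\bigr)(\Delta_{ui}-\Delta_{vi}) \\
&\quad + \tfrac{1}{m}\sum_i(\Delta_{ui}-\Delta_{vi})^2.
\end{align*}
A direct computation shows that, conditionally on $\cE'$, the correction $\xi^2_{uv}$ equals $\tfrac{1}{m}\sum_i\mathbb{E}[n_{ui}^2+n_{vi}^2\mid\cE']$, so subtracting it cancels the dominant noise-squared contribution of the last term. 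What remains splits into (i) the deterministic bias-squared piece $\tfrac{1}{m}\sum_i(b_{ui}-b_{vi})^2=O(h^{2\lambda})$, (ii) mixed signal-times-noise and bias-times-noise cross terms, each a sum of independent mean-zero subgaussians controlled by Bernstein, and (iii) the bilinear form $\tfrac{1}{m}\sum_i n_{ui}n_{vi}$, which is mean zero since $\varepsilon_u$ and $\varepsilon_v$ are independent across distinct rows and concentrates via Hanson--Wright. Combining these estimates and union bounding over the $\binom{n}{2}$ row pairs gives, uniformly with high probability,
\begin{align*}
|\hat d^2(u,v)-d^2(u,v)| = O\!\left(h^{2\lambda}+\left(\tfrac{pm}{\log(nm)}\right)^{-\min(2\lambda/d_2,\,4\lambda/(4\lambda+d_2))}\right),
\end{align*}
which is the fast distance-estimation rate promised by \cite{LepskiNemirovskiSpokoiny99}.

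For Step~3 I would decompose $\hat F_{ui}-F_{ui}$ into a bias plus a mean-zero noise average. Holder smoothness in $\beta$ gives $|f(\alpha_v,\beta_j)-f(\alpha_v,\beta_i)|\le L\eta_2^{\lambda}$ for $j\in\cN_2(i,\eta_2)$, and the squared average of $(f(\alpha_v,\beta_i)-f(\alpha_u,\beta_i))$ over $v\in\cN_1(u,\eta_1)$ is bounded by $\eta_1^2$ plus the Step~2 distance-estimation error. For the variance I need a matching lower bound on $|\cN_1||\cN_2|$: Holder smoothness in $\alpha$ implies that any $v$ with $\|\alpha_u-\alpha_v\|^{\lambda}\le c\eta_1$ satisfies $d(u,v)\le\eta_1/2$ and hence, by Step~2, $\hat d^2(u,v)\le\eta_1^2$ with high probability, so $|\cN_1|\gtrsim n\eta_1^{d_1/\lambda}$; similarly $|\cN_2|\gtrsim m\eta_2^{d_2}$ by Chernoff. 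The variance is then of order $\sigma^2/(pnm\eta_1^{d_1/\lambda}\eta_2^{d_2})$. Choosing $\eta_1=\eta_2^{\lambda}=(pnm)^{-\lambda/(2\lambda+d_1+d_2)}$ balances $\eta_1^2$ against this variance and yields the oracle rate, provided the Step~2 error is $O(\eta_1^2)$---this is precisely the upper bound on $n$ assumed in the moderate regime of the theorem. Once $n$ exceeds this window the Step~2 error becomes binding, so the tall-matrix choices $\eta_1=2h^{\lambda}$, $\eta_2=h$ with the stated $h$ match $\eta_1^2$ to the Step~2 error and yield the second MSE bound.

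The hardest part is the fast uniform concentration in Step~2. Beating the pointwise regression rate requires the data-driven debiasing $\xi^2_{uv}$ of the noise-squared term, Hanson--Wright control of the cross-row bilinear form with enough margin to survive the $n^2$-fold union bound, and careful bookkeeping of the residual $O(h^{2\lambda})$ bias; the resulting distance-estimation accuracy must then be tight enough to simultaneously upper bound the Step~3 bias and lower bound $|\cN_1|$ at the level $\eta_1^2$, which is exactly the quantity controlling the phase transition between the two regimes in the theorem.
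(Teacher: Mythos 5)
Your plan follows the same route as the paper: the same three-stage analysis (Nadaraya--Watson bias/variance, fast $L_2$-distance estimation, nearest-neighbor bias-variance balance), the same tools (Hanson--Wright for the noise-quadratic term, linear subgaussian concentration for the cross terms, Chernoff for neighbor-count lower bounds, H\"older smoothness for the bias), and the same parameter choices and phase-transition accounting. The one thing I would push back on is the uniform Step~2 bound you state, $|\hat d^2(u,v)-d^2(u,v)| = O\bigl(h^{2\lambda}+(pm/\log(nm))^{-\min(2\lambda/d_2,\,4\lambda/(4\lambda+d_2))}\bigr)$ for all $(u,v)$. This is not correct as an additive bound uniformly over pairs: the signal-times-noise cross term $\tfrac{2}{m}\sum_i(f(\alpha_u,\beta_i)-f(\alpha_v,\beta_i))(n_{ui}-n_{vi})$ has standard deviation proportional to $d(u,v)$, and the bias-times-signal cross term contributes $O(d(u,v)h^{\lambda})$, so for well-separated rows you only get $|\hat d^2 - d^2| = O(d(u,v)\cdot a + b)$ with $a,b$ as above. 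The paper deals with this by isolating the linear piece as $a\,\tilde d(u,v)$ and proving the elementary implication $|\hat d^2-\tilde d^2|\le a\tilde d+b \Rightarrow |\hat d-\tilde d|\le 3\max(a,\sqrt b)$, yielding a clean additive bound at the $\hat d$ level (see \eqref{eq:dist_sq} and Lemma~\ref{lemma:distance_conc}). Your argument is ultimately salvageable because in Step~3 you only invoke the bound for pairs with $d(u,v)=O(\eta_1)$, where the multiplicative term is dominated by $\eta_1^2$; but you should either make this restriction explicit or follow the paper in converting to a bound on $|\hat d - d|$ before thresholding.
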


In the regime that $n = \omega((mp)^{d_1/(2\lambda+d_2)})$ and $n = O\left(\left(mp\right)^{\min\left((2\lambda + d_1)/d_2, (2d_1 + d_2)/(4\lambda + d2)\right)}\right)$, our estimator in fact achieves the nonparametric minimax optimal rate of an oracle regression algorithm which observed both row and column covariates. This means that even without knowledge of the row covariates, the algorithm can learn the row distances from the data itself accurately enough to match the oracle.

For $n = \omega\left(\left(mp\right)^{\min\left((2\lambda + d_1)/d_2, (2d_1 + d_2)/(4\lambda + d2)\right)}\right)$, our algorithm still improves upon the performance of naive row regression, but does not match the oracle. This is expected as the knowledge of the row covariates is more powerful when there is a large number of rows relative to the datapoints in each row. Furthermore when the number of datapoints in each row is small relative to the number of rows, then the performance of our algoritm is limited by the rate of estimating $L_2$ distance between the latent functions associated to pairs of rows. Our analysis for our algorithm is nearly tight, as there is a matching (up to polylog factors) minimax lower bound of $(mp)^{-4\lambda/(4\lambda + d_2)}$ for estimating the $L_2$ norm of a function given fixed design, i.e. the $\beta$ covariates are evenly spaced. The term $\left(pm/\log mn\right)^{-2\lambda/d_2}$ arises from the randomness of the column covariates $\beta$.

Without access to side information, classical matrix estimation requires a sample complexity linear in $\max(n,m)$, i.e. $p = \omega(\min(n,m)^{-1})$ in order for a convergent estimator to exist, as any low rank matrix completion algorithm requires a growing number of observed entries in each row and column. In contrast, when we have access to column covariates, the sample complexity is only linear in $n$, i.e. $p = \omega(m^{-1})$. When $m$ grows faster than $n$, then this could significantly reduce sample complexity. In particular, when $m$ is large and $p = o(n^{-1})$, with high probability there will be columns which have zero entries observed. Given the column covariates however, we can simply predict the empty column using the prediction of other columns with similar covariates. This highlights the inherent difference in sample complexity with and without side information. 



\section{Proof Sketch}
As the final estimate is constructed using fixed radius nearest neighbor, the primary piece of the proof is to show that the estimated distances concentrate, as stated in Lemma \ref{lemma_abbrv:distance_conc}.
\begin{lemma}\label{lemma_abbrv:distance_conc}
	With prob $1 - o(1)$, for all $u,v \in [n]^2$,
	\begin{align*}
		|\hat{d}(u,v) -  d(u,v)| 
		&\leq O\left(\frac{2^{3d_2/4} \sigma \log^{1/2}(n) }{(pm)^{1/2} h^{d_2/4}} + 2 L (h/2)^{\lambda}\right) 
	\end{align*}
	for $h$ satisfying $h = o(\log^{-2/d_2}(n))$.
\end{lemma}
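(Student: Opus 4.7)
The plan is to first control $|\hat{d}^2(u,v) - d^2(u,v)|$ and then invoke the elementary inequality $|\sqrt{a}-\sqrt{b}| \leq \sqrt{|a-b|}$ for $a,b \geq 0$, interpreting $\hat{d}$ as the nonnegative root of $\max(0,\hat{d}^2)$. To that end, I would first perform a signal/noise split of the Nadaraya-Watson estimator. Write $\hat{f}(u,i) = \bar{f}(u,i) + Z_{ui}$, where
\[\bar{f}(u,i) = \frac{1}{W_{ui}}\sum_j \cE'_{uj}\, f(\alpha_u,\beta_j)\,K\!\bigl((\beta_i-\beta_j)/h\bigr), \qquad Z_{ui} = \frac{1}{W_{ui}}\sum_j \cE'_{uj}\,\epsilon_{uj}\,K\!\bigl((\beta_i-\beta_j)/h\bigr).\]
Because the rectangular kernel averages only over $\beta_j$ within $\ell_\infty$ distance $h/2$ of $\beta_i$, the Holder hypothesis yields $|\bar{f}(u,i) - f(\alpha_u,\beta_i)| \leq L(h/2)^\lambda$ on the event $W_{ui} \geq 1$. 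A Chernoff bound plus a union over the $nm$ pairs $(u,i)$ shows $W_{ui} = \Theta(pm h^{d_2})$ uniformly with probability $1 - o(1)$; the hypothesis $h = o(\log^{-2/d_2}(n))$ is the technical condition keeping this event valid.

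Next, setting $b_i = f(\alpha_u,\beta_i) - f(\alpha_v,\beta_i)$, $\delta_i = (\bar{f}(u,i)-\bar{f}(v,i)) - b_i$, and $\zeta_i = Z_{ui}-Z_{vi}$, I would expand
\begin{align*}
\hat{d}^2(u,v) - d^2(u,v) = \underbrace{\tfrac{1}{m}\sum_i\bigl(\delta_i^2 + 2 b_i \delta_i\bigr)}_{\text{(a) bias}} + \underbrace{\tfrac{2}{m}\sum_i(b_i+\delta_i)\zeta_i}_{\text{(b) cross}} + \underbrace{\tfrac{1}{m}\sum_i \zeta_i^2 - \xi^2_{uv}}_{\text{(c) quadratic noise}}.
\end{align*}
The correction $\xi^2_{uv}$ is exactly the conditional mean of $\tfrac{1}{m}\sum_i \zeta_i^2$ given $(\cE',\beta)$ (using that $Z_{ui}$ and $Z_{vi}$ are independent for $u \neq v$ since they depend on disjoint noise variables), so (c) is centered. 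The deterministic bound $|\delta_i|\leq 2L(h/2)^\lambda$ together with the boundedness of $f$ give $|\text{(a)}| = O(L h^\lambda)$. Conditional on $(\cE',\beta)$, (b) is Gaussian with standard deviation dominated by that of (c) and so is absorbed.

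The heart of the proof is (c). Conditional on $(\cE',\beta)$, it is a centered quadratic form in the independent Gaussians $\{\epsilon_{uj},\epsilon_{vj}\}_j$. I would apply the Hanson-Wright inequality together with a union bound over the $n^2$ row pairs to obtain a tail in terms of $\|M\|_F\sqrt{\log n} + \|M\|_{\mathrm{op}}\log n$, where $M$ is the matrix of the quadratic form. On the good event $W_{ui}\asymp pm h^{d_2}$, a direct calculation of the kernel-overlap quantities $\sum_j \cE'_{uj}K_{ij}K_{i'j}$ gives $\|M\|_F^2 = O(\sigma^4/((pm)^2 h^{d_2}))$ (the operator-norm term is lower order in the regime of interest), so that $\|M\|_F \sqrt{\log n}$ contributes $O(\sigma^2 \sqrt{\log n}/(pm h^{d_2/2}))$ to $|\text{(c)}|$. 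Taking square roots and combining with the $O(Lh^\lambda)$ bias via $\sqrt{a+b}\leq\sqrt{a}+\sqrt{b}$ produces the stated rate.

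The principal obstacle is the correlation structure in (c): $\zeta_i$ and $\zeta_{i'}$ share the noise $\epsilon_{uj}$ whenever $\beta_j$ lies in both kernels, so a naive independent-summand bound is too loose (it would produce $h^{-d_2}$ in the denominator rather than the correct $h^{-d_2/2}$). Getting the right rate requires careful combinatorial accounting of kernel overlaps, i.e.\ counting pairs $(i,i')$ whose kernels both cover a given $\beta_j$, while simultaneously maintaining uniform control of the $W_{ui}$'s across the union bound over $n^2$ pairs.
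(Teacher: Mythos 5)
Your decomposition of $\hat{d}^2(u,v) - d^2(u,v)$ into bias, cross, and quadratic-noise pieces, and your Hanson-Wright bound for the quadratic piece (including the correct accounting of kernel overlap giving $\|M\|_F^2 = O(\sigma^4/((pm)^2 h^{d_2}))$), track the paper's Lemma~\ref{lemma:distSq_approx} closely. The gap is in the last step, where you convert the bound on $|\hat{d}^2 - d^2|$ into a bound on $|\hat{d} - d|$ via $|\sqrt{a}-\sqrt{b}|\leq\sqrt{|a-b|}$. That inequality is tight only when $a$ and $b$ are both near zero; when $d(u,v)=\Theta(1)$ it is badly lossy. Concretely, your bias term (a) is $\tfrac{1}{m}\sum_i(\delta_i^2 + 2b_i\delta_i)$, and the cross piece $\tfrac{2}{m}\sum_i b_i\delta_i$ is only $O(d\cdot Lh^\lambda)$ by Cauchy--Schwarz, not $O(L^2h^{2\lambda})$. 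Applying the square-root inequality to this gives $|\hat{d}-d|\lesssim \sqrt{Lh^\lambda}$, i.e.\ a rate of $h^{\lambda/2}$, which is strictly worse than the $2L(h/2)^\lambda$ the lemma asserts. The same issue afflicts the cross term (b): its conditional standard deviation is $\Theta(\sigma\,\tilde d(u,v)/\sqrt{pm})$ --- it scales with $\tilde d$ --- and after a naive square root it is not in fact dominated by $\sqrt{|(\text{c})|}$ when $\tilde d=\Theta(1)$, contrary to what you assert.

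The paper sidesteps this by factoring through the noiseless intermediate $\tilde d(u,v)$ (your $\bar f$-based distance). First, $|\tilde d(u,v)-d(u,v)|\leq 2L(h/2)^\lambda$ follows directly from the triangle inequality for the $L_2$ norm --- no squaring, so no loss. Second, the remaining term $|\hat d^2(u,v)-\tilde d^2(u,v)|$ consists only of your (b) and (c), and the paper shows it is at most $a\,\tilde d(u,v)+b$ for $a\asymp\sigma\log n/\sqrt{pm}$ (the linear-in-noise piece proportional to $\tilde d$) and $b\asymp\sigma^2\log n/(pm\,h^{d_2/2})$ (the quadratic piece). It then uses the sharper conversion in \eqref{eq:dist_sq}: if $|\hat d^2-\tilde d^2|\leq a\tilde d + b$ then $|\hat d-\tilde d|\leq 3\max(a,\sqrt b)$, proved by case-splitting on whether $\tilde d\lesssim\sqrt b$. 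This is precisely what recovers a rate of $\sqrt b$ rather than $\sqrt{a\tilde d + b}$, and it is essential for the stated bound. To repair your proof you would either need to import this two-step structure (separate $|\tilde d-d|$ via the $L_2$ triangle inequality, then apply the $a\tilde d+b$ conversion to $|\hat d-\tilde d|$), or, if you insist on comparing $\hat d^2$ to $d^2$ directly, observe that your (a)+(b)+(c) can still be written as $a'd+b'$ and then apply the same case-split conversion --- but the elementary $|\sqrt a-\sqrt b|\leq\sqrt{|a-b|}$ alone cannot yield the claimed $h^\lambda$ bias.
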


To prove Lemma \ref{lemma_abbrv:distance_conc}, we will separately bound the error due to the additive Gaussian observation noise, and the error due to the randomness in sampling $\{\beta_i\}_{i\in[m]}$ and the indices in $\cE'$.
Let $\tilde{f}(u,i)$ and $\tilde{d}^2(u,v)$ denote the hypothetical distances estimated from comparing the Nadaraya-Watson estimator on row $u$'s data assuming no observation noise,
\begin{align*}
\tilde{f}(u,i) &= \frac{\sum_{j \in \cE'_u} K\left(\frac{\beta_i - \beta_j}{h}\right) f(\alpha_u,\beta_j)}{W_{ui}} \\
\tilde{d}^2(u,v) &= \frac{1}{m} \sum_{l \in [m]} \left( \tilde{f}(u, l) - \tilde{f}(v, l)\right)^2.
\end{align*}
The proof of Lemma \ref{lemma_abbrv:distance_conc} bounds the error due to observation noise as captured by $|\hat{d}(u,v) -  \tilde{d}(u,v)|$ using concentration inequalities that exploit the Gaussian distribution of the additive observation noise terms. We bound the error due to the sparse sampling as captured by $|\tilde{d}(u,v) - d(u,v)|$ using Holder-smoothness and regularity of the sampling model for the column covariates and the location of the observed entries.

By approximating $(u,i)$ via averaging over nearby datapoints $(v,j)$ for which $\beta_i - \beta_j$ is small and $d(u,v)$ is small, we can bound the MSE using the Holder smoothness property of $f$, the construction of $d(u,v)$, and the uniformity of the sampling model. Our proof slightly deviates from the typical bias variance calculations common to nearest neighbor estimators, as $d(u,v)$ only estimates the $L_2$ difference between the latent functions of $u$ and $v$. A bound on the $L_2$ difference can translate into a very loose bound on the $L_{\infty}$ distance especially in high dimensions. As a result, $d(u,v)$ being small does not imply that $f(\alpha_u,\beta)$ is close to $f(\alpha_v,\beta)$ for all values of $\beta$. However as our goal is to compute an aggregate MSE bound, we can still obtain a good bound on the MSE without having a tight $L_{\infty}$ bound. In particular, under the good event that the distances estimates concentrate and the nearest neighborhood sets are sufficiently large, Lemma \ref{lemma_abbrv:nearest_neighbor} bounds the MSE of our estimator.

\begin{lemma} \label{lemma_abbrv:nearest_neighbor}
Conditioned on the events $\cap_{u,v\in [n]^2} \{|\hat{d}(u,v) - d(u,v)| \leq \Delta\}$, $\cap_{u\in [n]} \{|\cN_1(u,\eta_1)| \geq z_1\}$ and $\cap_{i\in [m]} \{|\cN_2(i,\eta_2)| \geq z_2\}$, with respect to the randomness in $\cE''$ and $\{X_{ab}\}_{(a,b) \in \cE''}$, for $p = \omega((z_1 z_2)^{-1})$, with probability $1 - o(1)$, it holds that
\begin{align*}
MSE &= O\left(\frac{\sigma}{\sqrt{p z_1 z_2}} + L^2 \eta_2^{2\lambda} + (\eta_1 + \Delta)^2\right).
\end{align*}
\end{lemma}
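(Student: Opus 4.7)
The plan is to work throughout conditionally on the three good events, so that the neighborhoods $\cN_1(u,\eta_1)$ and $\cN_2(i,\eta_2)$ are deterministic and only the fresh randomness in $\cE''$ and the Gaussian noise on $\cE''$ remains. I will decompose the pointwise error into a noise term and a ``bias'' term,
\[
\hat F_{ui} - F_{ui} = \underbrace{\frac{\sum_{v,j} \cE''_{vj}\,\epsilon_{vj}}{N(u,i)}}_{\text{noise}} + \underbrace{\frac{\sum_{v,j} \cE''_{vj}\,(F_{vj}-F_{ui})}{N(u,i)}}_{\text{bias-like}},
\]
where $N(u,i)=\sum_{v\in\cN_1(u,\eta_1),\,j\in\cN_2(i,\eta_2)} \cE''_{vj}$ and all sums are restricted to the neighborhoods. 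Since $N(u,i)$ is a sum of at least $z_1z_2$ independent Bernoulli$(p)$'s, a Chernoff bound (together with a union bound over $(u,i)$) gives $N(u,i)\in[\tfrac12 p\,|\cN_1(u,\eta_1)|\,|\cN_2(i,\eta_2)|,\,2p\,|\cN_1(u,\eta_1)|\,|\cN_2(i,\eta_2)|]$ simultaneously for all $(u,i)$, with probability $1-o(1)$. The noise term, conditionally on $\cE''$, is a mean-zero Gaussian with variance $\sigma^2/N(u,i)=O(\sigma^2/(pz_1z_2))$, so its contribution to the MSE is of order $\sigma^2/(pz_1z_2)$, matching the stated rate.

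\textbf{Bounding the bias term.} Applying Jensen (equivalently Cauchy--Schwarz) to the bias-like term,
\[
\Bigl(\tfrac{1}{N(u,i)}\sum \cE''_{vj}(F_{vj}-F_{ui})\Bigr)^2 \;\le\; \tfrac{1}{N(u,i)}\sum \cE''_{vj}(F_{vj}-F_{ui})^2.
\]
Split $F_{vj}-F_{ui}=(F_{vj}-F_{vi})+(F_{vi}-F_{ui})$. For the first summand, Holder smoothness and the definition $j\in\cN_2(i,\eta_2)$ give $(F_{vj}-F_{vi})^2\le L^2\eta_2^{2\lambda}$ pointwise; this contributes $O(L^2\eta_2^{2\lambda})$ to the bias bound for every $(u,i)$. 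The second summand $(F_{vi}-F_{ui})^2$ is independent of $j$, so the corresponding contribution simplifies to $\sum_{v\in\cN_1(u,\eta_1)} q_v(F_{vi}-F_{ui})^2$ where $q_v = M_v(i)/N(u,i)$ and $M_v(i)=\sum_{j\in\cN_2(i,\eta_2)} \cE''_{vj}$. Using the Chernoff bounds $M_v(i)\le 2p|\cN_2(i,\eta_2)|$ and $N(u,i)\ge\tfrac12 p\,|\cN_1(u,\eta_1)|\,|\cN_2(i,\eta_2)|$ (again uniform after a union bound), one obtains $q_v\le 4/|\cN_1(u,\eta_1)|$.

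\textbf{The key averaging step and main obstacle.} The main difficulty is that the proxy distance $d(u,v)$ only controls the $L_2$ difference $\tfrac{1}{m}\sum_l (F_{ul}-F_{vl})^2$, not the pointwise differences $(F_{vi}-F_{ui})^2$. To circumvent this, I average over $i\in[m]$ before invoking the distance bound: since MSE averages $(\hat F_{ui}-F_{ui})^2$ over all $(u,i)$, I can reverse the order of summation and use
\[
\tfrac{1}{m}\sum_{i\in[m]} (F_{vi}-F_{ui})^2 \;=\; d^2(u,v).
\]
Because $v\in\cN_1(u,\eta_1)$ means $\hat d(u,v)\le\eta_1$, and the distance-concentration event guarantees $|\hat d(u,v)-d(u,v)|\le\Delta$, I obtain $d(u,v)\le\eta_1+\Delta$ and hence
\[
\tfrac{1}{nm}\sum_{u,i}\sum_{v} q_v(F_{vi}-F_{ui})^2 \;\le\; \tfrac{4}{n}\sum_{u}\tfrac{1}{|\cN_1(u,\eta_1)|}\sum_{v\in\cN_1(u,\eta_1)} d^2(u,v) \;\le\; 4(\eta_1+\Delta)^2.
\]
Combining the noise bound, the Holder bound, and this averaged bias bound with the elementary inequality $(a+b+c)^2\lesssim a^2+b^2+c^2$ (after taking the square root inside the MSE decomposition) gives the stated rate. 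The hard point is purely the ordering-of-sums manoeuvre that swaps the column average inside before the $L^2$-to-pointwise degradation is forced; all the rest is Chernoff concentration on Bernoulli sums together with Holder smoothness.
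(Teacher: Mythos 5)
Your proposal is correct and follows essentially the same route as the paper: the same decomposition into a noise term and a bias term, the same Jensen step followed by the same $F_{vj}-F_{ui}=(F_{vj}-F_{vi})+(F_{vi}-F_{ui})$ split, the same Chernoff concentration on the Bernoulli counts $N(u,i)$ and $M_v(i)$, and crucially the same swap of summation order over $i$ so that the $L_2$ row distance $d^2(u,v)=\tfrac{1}{m}\sum_i(F_{ui}-F_{vi})^2$ can be invoked before any pointwise bound is needed. Your noise contribution of order $\sigma^2/(pz_1z_2)$ is actually sharper than the $\sigma/\sqrt{pz_1z_2}$ quoted in the lemma statement, and is what the paper's own parameter choices in the main theorem actually use.
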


The final result then follows from bounding the probability that the good events are violated along with a proper choice of the parameters $h, \eta_1$, and $\eta_2$, which then also determine $\Delta$, $z_1$, and $z_2$.

\begin{figure*}[!ht]
     \centering
     \begin{subfigure}[b]{0.32\textwidth}
         \centering
         \includegraphics[width=\textwidth]{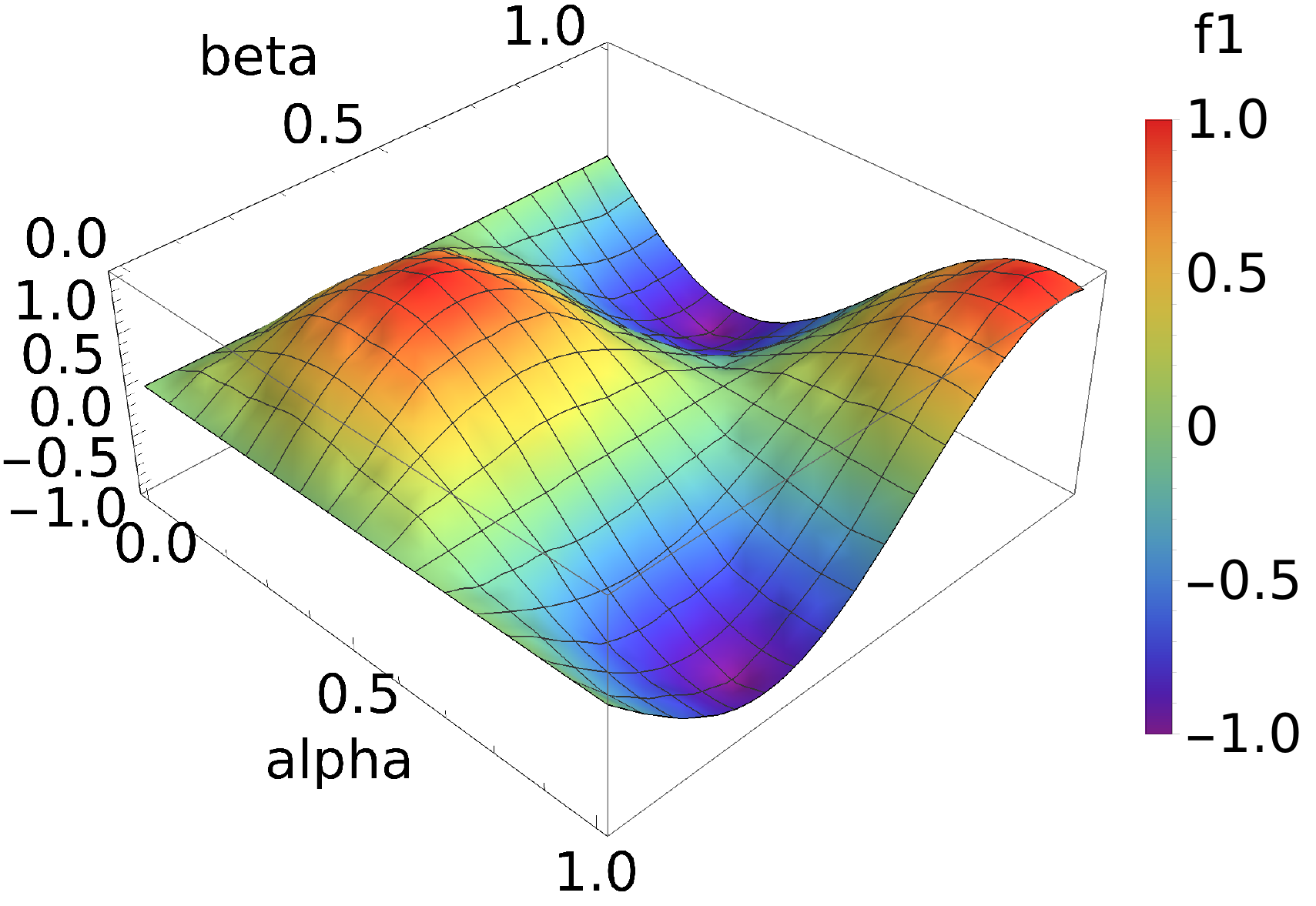}
         \label{fig:f1}
     \end{subfigure}
     \begin{subfigure}[b]{0.32\textwidth}
         \centering
         \includegraphics[width=\textwidth]{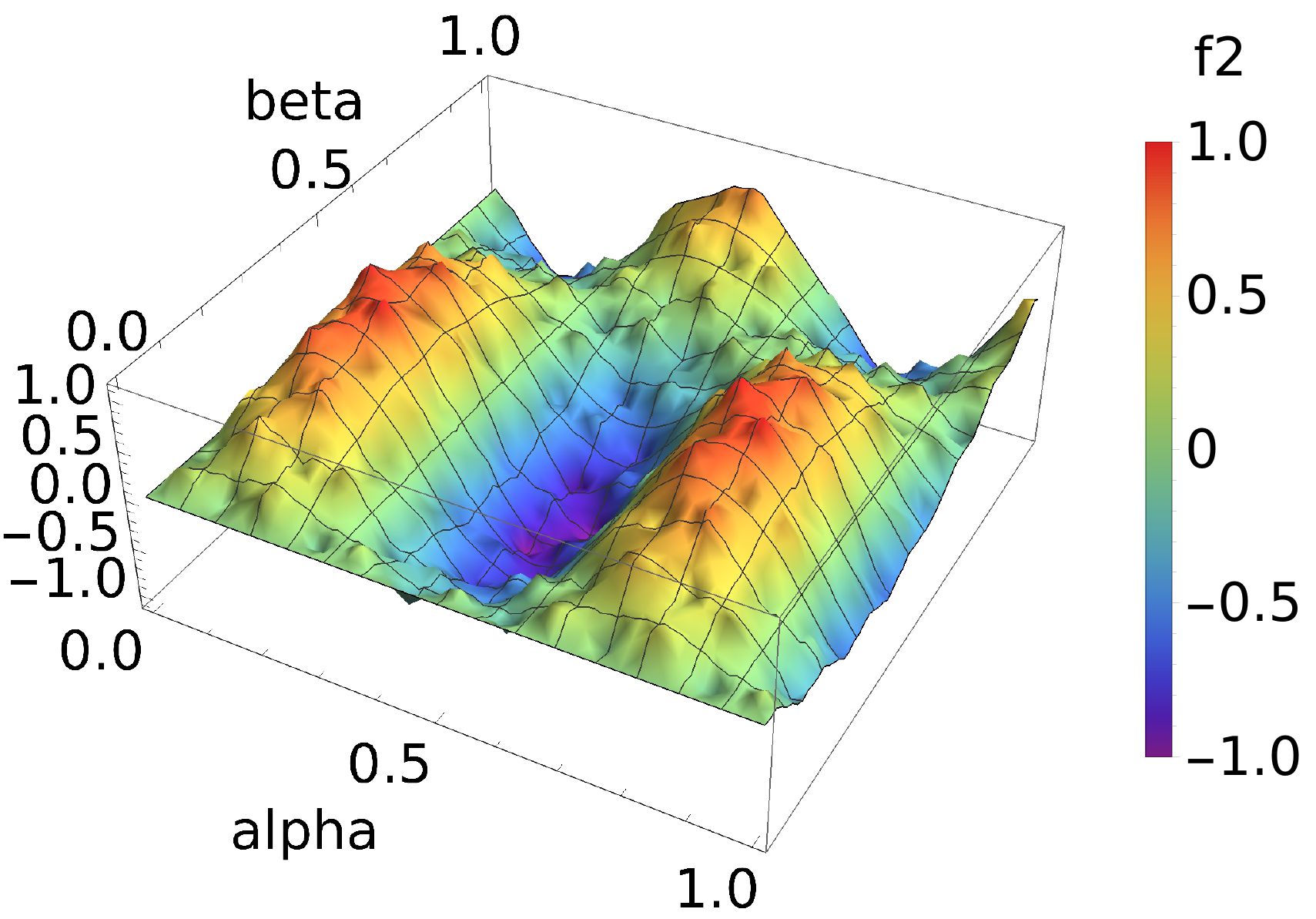}
         \label{fig:f2}
     \end{subfigure}
     \begin{subfigure}[b]{0.32\textwidth}
         \centering
         \includegraphics[width=\textwidth]{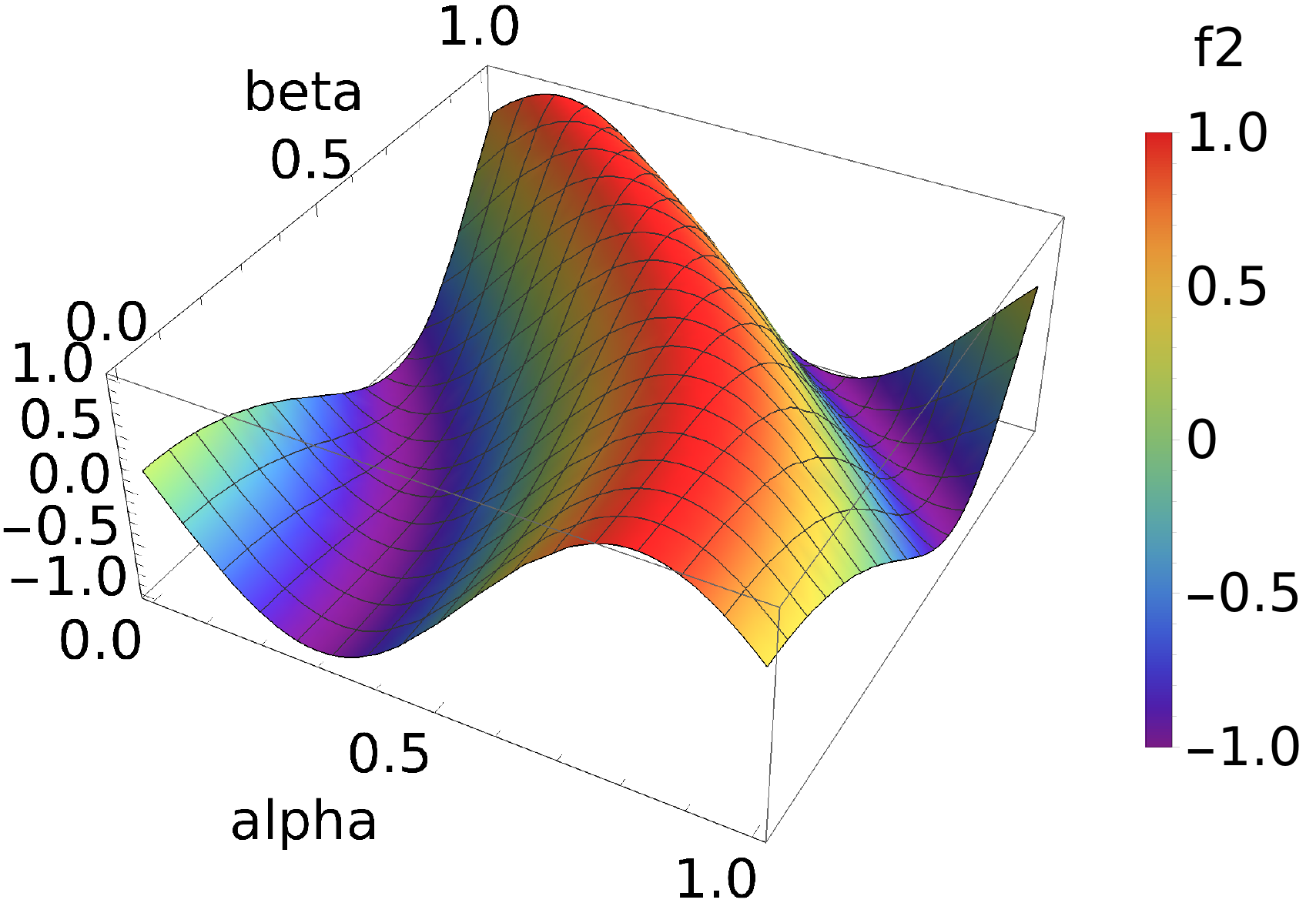}
         \label{fig:f3}
     \end{subfigure}
        \caption{Ground truth latent functions $f_1, f_2, f_3$ (left to right) used to generate the data.}
        \label{fig:truth}
\end{figure*}

\section{Synthetic Experiments}

We construct synthetic experiments to illustrate the performance of our algorithm in practice. We use the following three functions $f_1, f_2$, and $f_3$ defined below to generate the ground truth matrices for our experiments, where the row and column covariates are sampled from the uniform distribution on the unit interval. 
\begin{align*}
f_1(\alpha,\beta) &= \sin (5 \alpha) \sin (5 \beta) + 0.05 \left(\sin (25 \alpha) \sin (25\beta)\right)^3\\
f_2(\alpha,\beta) &= \sin (10 \alpha) \sin (4 \beta) + 0.2 \left(\sin (40 \alpha) \sin (40\beta)\right)^3\\
f_3(\alpha,\beta) &= \sin (3 + 6\alpha + 4\beta^2) 
\end{align*}
These functions are depicted in Figure \ref{fig:truth}. All of these functions can be written as a sum of two terms, which each can be written as a product of a function of $\alpha$ and a function of $\beta$. As a result, the corresponding ground truth matrices are all rank 2. The latent factors of the low rank decomposition however are non-linear functions of $\alpha$ and $\beta$, such that knowledge of $\alpha$ and $\beta$ does not reveal the latent row and column subspaces.

To construct each dataset, we sample the row covariates $\{\alpha_i\}_{i \in [n]}$ and column covariates $\{\beta_j\}_{j \in [m]}$ independently from $U[0,1]$. Each index $(i,j) \in [n]\times[m]$ is sampled independently with probability $p$, upon which the associated datapoint $X_{ij}$ is observed with additive noise distributed as $N(0,\sigma^2)$. The dataset consists of the column covariates $\{\beta_j\}_{j \in [m]}$ and the noisy matrix observations $\{X_{ij}\}_{(i,j) \in \cE}$. Importantly, $\{\alpha_i\}_{i \in [n]}$ is not part of the given dataset. 

We compare our algorithm both to the na\"ive algorithm that estimates each row separately using kernel regression, and to an oracle kernel regression algorithm that is given knowledge of both $\alpha$ and $\beta$. These algorithms both use the square kernel, to match the choice of kernel used in our algorithm. The kernel regression bandwidth parameters as well as the thresholds used for our nearest neighbor estimator are chosen using a grid search for values of $h$ and $\eta_2$ between $0.005$ and $0.2$ in increments of $0.005$. For tuning $\eta_1$, we select over the number of row nearest neighbors instead, which is also determined by a grid search between 1 and 50. We do not sample split in the implementation of our algorithm although it was assumed for the purposes of the analysis. As the ground truth matrix is rank 2, we also compared our algorithm against classical matrix completion algorithms such as alternating least squares and soft impute. Soft impute performed significantly worse by an order of magnitude, such that it is not displayed in the first row of plots in Figure \ref{fig:results}. We allow ALS to use the knowledge that the matrix is rank 2, but it does not use the knowledge of $\beta$ as there is no existing algorithm to incorporate side information when the latent features are nonlinear with respect to the observed covariates.
\begin{figure*}[!ht]
     \centering
     \includegraphics[width=\textwidth]{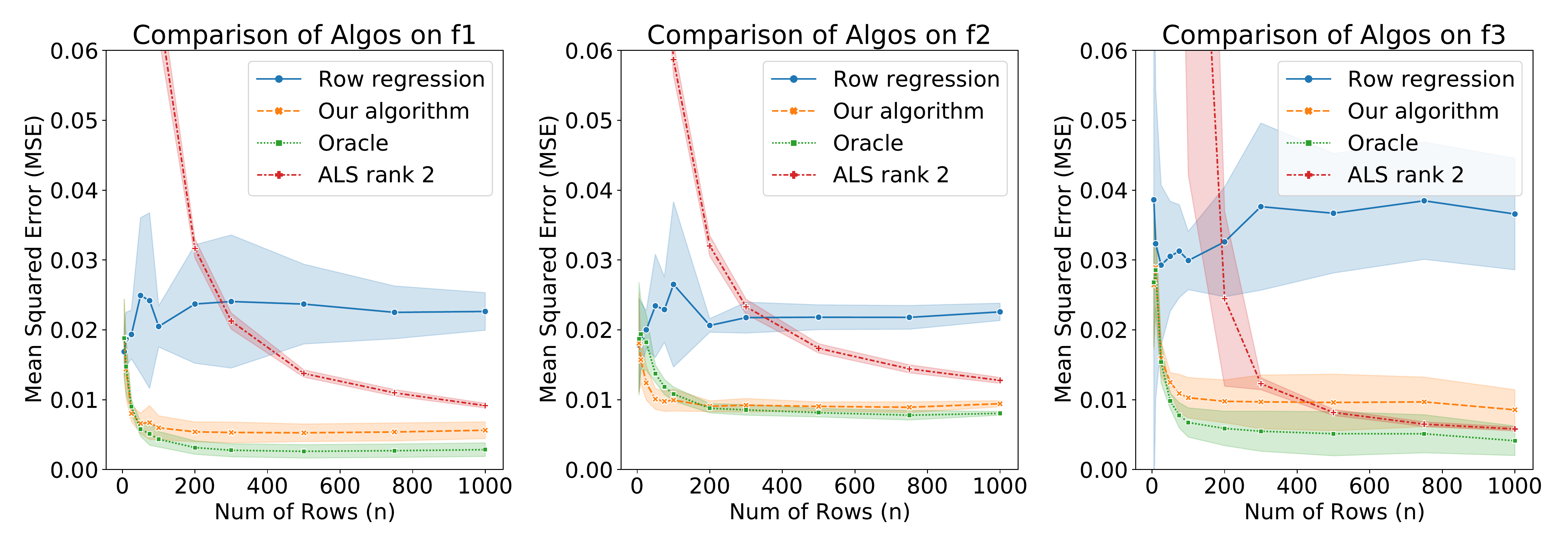} \\
     \includegraphics[width=\textwidth]{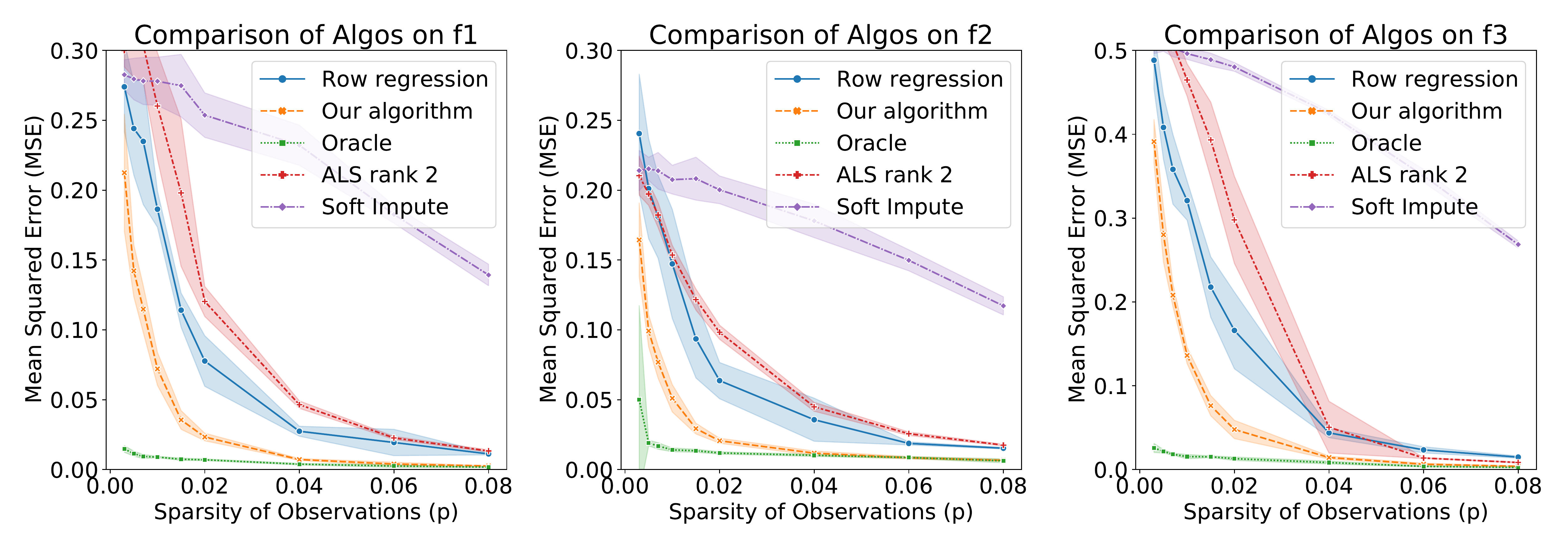} \\
        \caption{Comparison of our algorithm with row kernel regression, oracle kernel regression, alternating least squares (ALS) with rank 2, and Soft Impute. The top row shows the MSE of the resulting estimates as a function of the number of rows where $m = 500, p = 0.05$, and $\sigma = 0.2$. The bottom row shows the MSE of the resulting estimates as a function of the sparsity of observations in the dataset where $m = 500, n = 200$, and $\sigma = 0.2$. Note the scaling of the y-axis in the bottom row plots are slightly different to enhance readability.}
        \label{fig:results}
\end{figure*}



The first row of plots in Figure \ref{fig:results} shows the resulting MSE of each of the algorithms as a function of the number of rows $n$, where we set $m = 500$, $p = 0.05$, and $\sigma = 0.2$.
The second row of plots in Figure \ref{fig:results} shows the resulting MSE of each of the algorithms as a function of the sparsity of observations $p$, where we set $m = 500$, $n = 200$, and $\sigma = 0.2$.
For each combination of parameters, we generate 10 datasets, each of which is then given to each of the algorithms we benchmark.
The line plot shows the MSE averaged over the 10 sampled datasets, and the shaded region shows the standard deviation of the resulting MSE from these 10 datasets. The result across all three functions $f_1$, $f_2$, and $f_3$ are very similar. Our algorithm performs very well across all values of $n$, matching the oracle at small values of $n$ and still performing close to the oracle even at larger values of $n$. Our algorithm also performs well even at low levels of sparsity, significantly outperforming other benchmarks, and nearly matching the oracle at fairly low sparsity levels.

Perhaps more interesting is the comparison of our algorithm to classical matrix completion methods. In our examples, we chose functions that are in fact only rank 2, even though Lipschitz functions can have significantly higher rank. One might expect the matrix completion algorithms to perform well as they exploit the lower complexity low rank models as opposed to our nonparametric approach.  To the contrary, our results and simulations show that simple nonparmetric nearest neighbor style estimators can outperform low rank methods even in the class of low rank matrices, when the matrix is far from square, and when there is available covariate side information. As a point of reference, when $m = 500$, $n=200$, the minimum sparsity such that we observe 2 datapoints in each row and column on average would be $p = 0.02$, which is the rough threshold after which there is sufficient information to fit a rank 2 model. Note that our algorithm performs well at this level of extreme sparsity, outperforming row regression and classical matrix completion. At a sparsity level of $p=0.05$, we would expect that there is sufficient information to fit a low rank model at roughly $n = 40$. While our algorithm performs well at such low values of $n$, the matrix completion algorithms do not perform well until significantly higher values of $n$.

SoftImpute, a classical matrix completion algorithm based on nuclear norm minimization, performed very poorly in our experiments by an order of magnitude across all values of $n$, such that we did not display its results in the first row of plots shown above. When the number of rows is small, alternating least squares also performs poorly by an order of magnitude. This highlights that the role of the column covariates $\beta$ is especially critical when the number of rows is small, as ALS does not exploit this extra side information in contrast to the nonparametric methods. At a certain threshold of $n$, when the matrix becomes closer to square, ALS outperforms the naive row kernel regression, as ALS is able to use the rank 2 structure to share information across rows. Our algorithm remains competitive if not better than ALS even for larger values of $n$. As a function of the sparsity, it is also clear that SoftImpute and ALS perform significantly worse than our algorithm at low levels of sparsity.

%

\section*{Acknowledgements}

We gratefully acknowledge funding from the NSF under grants CCF-1948256 and CNS-1955997. Christina Lee Yu is also supported by an Intel Rising Stars Award.


\begin{thebibliography}{10}

\bibitem{bertsimas2020fast}
Dimitris Bertsimas and Michael~Lingzhi Li.
\newblock Fast exact matrix completion: A unified optimization framework for
  matrix completion.
\newblock {\em Journal of Machine Learning Research}, 21(231):1--43, 2020.

\bibitem{BorgsChayesLeeShah17}
Christian Borgs, Jennifer Chayes, Christina~E Lee, and Devavrat Shah.
\newblock Thy friend is my friend: Iterative collaborative filtering for sparse
  matrix estimation.
\newblock In {\em Advances in Neural Information Processing Systems}, pages
  4715--4726, 2017.
\newblock Also available at ArXiv.

\bibitem{borgs2017iterative}
Christian Borgs, Jennifer Chayes, Devavrat Shah, and Christina~Lee Yu.
\newblock Iterative collaborative filtering for sparse matrix estimation.
\newblock {\em arXiv preprint arXiv:1712.00710}, 2017.

\bibitem{burkina2021inductive}
M~Burkina, I~Nazarov, M~Panov, G~Fedonin, and B~Shirokikh.
\newblock Inductive matrix completion with feature selection.
\newblock {\em Computational Mathematics and Mathematical Physics},
  61(5):719--732, 2021.

\bibitem{CandesPlan10}
Emmanuel~J Candes and Yaniv Plan.
\newblock Matrix completion with noise.
\newblock {\em Proceedings of the IEEE}, 98(6):925--936, 2010.

\bibitem{CandesTao10}
Emmanuel~J Cand{\`e}s and Terence Tao.
\newblock The power of convex relaxation: Near-optimal matrix completion.
\newblock {\em IEEE Transactions on Information Theory}, 56(5):2053--2080,
  2010.

\bibitem{Chatterjee15}
Sourav Chatterjee.
\newblock Matrix estimation by universal singular value thresholding.
\newblock {\em The Annals of Statistics}, 43(1):177--214, 2015.

\bibitem{chiang2018using}
Kai-Yang Chiang, Inderjit~S Dhillon, and Cho-Jui Hsieh.
\newblock Using side information to reliably learn low-rank matrices from
  missing and corrupted observations.
\newblock {\em The Journal of Machine Learning Research}, 19(1):3005--3039,
  2018.

\bibitem{chiang2015matrix}
Kai-Yang Chiang, Cho-Jui Hsieh, and Inderjit~S Dhillon.
\newblock Matrix completion with noisy side information.
\newblock In C.~Cortes, N.~D. Lawrence, D.~D. Lee, M.~Sugiyama, and R.~Garnett,
  editors, {\em Advances in Neural Information Processing Systems 28}, pages
  3447--3455. Curran Associates, Inc., 2015.

\bibitem{dong2021riemannian}
Shuyu Dong, P-A Absil, and KA~Gallivan.
\newblock Riemannian gradient descent methods for graph-regularized matrix
  completion.
\newblock {\em Linear Algebra and its Applications}, 623:193--235, 2021.

\bibitem{dong2019preconditioned}
Shuyu Dong, Pierre-Antoine Absil, and Kyle~A Gallivan.
\newblock Preconditioned conjugate gradient algorithms for graph regularized
  matrix completion.
\newblock In {\em ESANN}, 2019.

\bibitem{eftekhari2018weighted}
Armin Eftekhari, Dehui Yang, and Michael~B Wakin.
\newblock Weighted matrix completion and recovery with prior subspace
  information.
\newblock {\em IEEE Transactions on Information Theory}, 64(6):4044--4071,
  2018.

\bibitem{elmahdy2020matrix}
Adel Elmahdy, Junhyung Ahn, Changho Suh, and Soheil Mohajer.
\newblock Matrix completion with hierarchical graph side information.
\newblock {\em Advances in neural information processing systems}, 2020.

\bibitem{GaoLuMaZhou16}
Chao Gao, Yu~Lu, Zongming Ma, and Harrison~H Zhou.
\newblock Optimal estimation and completion of matrices with biclustering
  structures.
\newblock {\em Journal of Machine Learning Research}, 17(161):1--29, 2016.

\bibitem{GaoLuZhou15}
Chao Gao, Yu~Lu, and Harrison~H Zhou.
\newblock Rate-optimal graphon estimation.
\newblock {\em The Annals of Statistics}, 43(6):2624--2652, 2015.

\bibitem{GeLeeMa16}
Rong Ge, Jason~D Lee, and Tengyu Ma.
\newblock Matrix completion has no spurious local minimum.
\newblock In {\em Advances in Neural Information Processing Systems}, pages
  2973--2981, 2016.

\bibitem{ghassemi2018global}
Mohsen Ghassemi, Anand Sarwate, and Naveen Goela.
\newblock Global optimality in inductive matrix completion.
\newblock In {\em 2018 IEEE International Conference on Acoustics, Speech and
  Signal Processing (ICASSP)}, pages 2226--2230. IEEE, 2018.

\bibitem{guo2017convex}
Yuhong Guo.
\newblock Convex co-embedding for matrix completion with predictive side
  information.
\newblock In {\em Thirty-First AAAI Conference on Artificial Intelligence},
  2017.

\bibitem{jain2013provable}
Prateek Jain and Inderjit~S Dhillon.
\newblock Provable inductive matrix completion.
\newblock {\em arXiv preprint arXiv:1306.0626}, 2013.

\bibitem{jain2013low}
Prateek Jain, Praneeth Netrapalli, and Sujay Sanghavi.
\newblock Low-rank matrix completion using alternating minimization.
\newblock In {\em Proceedings of the 45th annual ACM symposium on Theory of
  computing}, pages 665--674. ACM, 2013.

\bibitem{kalofolias2014matrix}
Vassilis Kalofolias, Xavier Bresson, Michael Bronstein, and Pierre
  Vandergheynst.
\newblock Matrix completion on graphs.
\newblock {\em arXiv preprint arXiv:1408.1717}, 2014.

\bibitem{KeshavanMontanariOh10b}
Raghunandan~H Keshavan, Andrea Montanari, and Sewoong Oh.
\newblock Matrix completion from noisy entries.
\newblock {\em Journal of Machine Learning Research}, 11(Jul):2057--2078, 2010.

\bibitem{KloppTsybakovVerzelen15}
Olga Klopp, Alexandre~B Tsybakov, and Nicolas Verzelen.
\newblock Oracle inequalities for network models and sparse graphon estimation.
\newblock {\em To appear in Annals of Statistics}, 2015.

\bibitem{LepskiNemirovskiSpokoiny99}
Oleg Lepski, Arkady Nemirovski, and Vladimir Spokoiny.
\newblock On estimation of the $l_r$ norm of a regression function.
\newblock {\em Probability theory and related fields}, 113(2):221--253, 1999.

\bibitem{li2020blind}
Y.~{Li}, D.~{Shah}, D.~{Song}, and C.~L. {Yu}.
\newblock Nearest neighbors for matrix estimation interpreted as blind
  regression for latent variable model.
\newblock {\em IEEE Transactions on Information Theory}, 66(3):1760--1784,
  March 2020.

\bibitem{lu2016sparse}
Jin Lu, Guannan Liang, Jiangwen Sun, and Jinbo Bi.
\newblock A sparse interactive model for matrix completion with side
  information.
\newblock {\em Advances in neural information processing systems},
  29:4071--4079, 2016.

\bibitem{mazumder2010tensor}
Rahul Mazumder, Trevor Hastie, and Robert Tibshirani.
\newblock Spectral regularization algorithms for learning large incomplete
  matrices.
\newblock {\em The Journal of Machine Learning Research}, 11:2287--2322, 2010.

\bibitem{rao2015collaborative}
Nikhil Rao, Hsiang-Fu Yu, Pradeep Ravikumar, and Inderjit~S Dhillon.
\newblock Collaborative filtering with graph information: Consistency and
  scalable methods.
\newblock In {\em NIPS}, volume~2, page~7. Citeseer, 2015.

\bibitem{song2016blind}
Dogyoon Song, Christina~E Lee, Yihua Li, and Devavrat Shah.
\newblock Blind regression: Nonparametric regression for latent variable models
  via collaborative filtering.
\newblock In {\em Advances in Neural Information Processing Systems}, pages
  2155--2163, 2016.

\bibitem{Tsybakov09}
Alexandre~B Tsybakov.
\newblock Introduction to nonparametric estimation. revised and extended from
  the 2004 french original. translated by vladimir zaiats, 2009.

\bibitem{UdellTownsend17}
Madeleine Udell and Alex Townsend.
\newblock Nice latent variable models have log-rank.
\newblock {\em arXiv preprint arXiv:1705.07474}, 2017.

\bibitem{Xu17}
Jiaming Xu.
\newblock Rates of convergence of spectral methods for graphon estimation.
\newblock {\em arXiv preprint arXiv:1709.03183}, 2017.

\bibitem{XuJinZhou13}
Miao Xu, Rong Jin, and Zhi-Hua Zhou.
\newblock Speedup matrix completion with side information: Application to
  multi-label learning.
\newblock In C.~J.~C. Burges, L.~Bottou, M.~Welling, Z.~Ghahramani, and K.~Q.
  Weinberger, editors, {\em Advances in Neural Information Processing Systems
  26}, pages 2301--2309. Curran Associates, Inc., 2013.

\bibitem{yin2015laplacian}
Ming Yin, Junbin Gao, and Zhouchen Lin.
\newblock Laplacian regularized low-rank representation and its applications.
\newblock {\em IEEE transactions on pattern analysis and machine intelligence},
  38(3):504--517, 2015.

\bibitem{yu2016temporal}
Hsiang-Fu Yu, Nikhil Rao, and Inderjit~S Dhillon.
\newblock Temporal regularized matrix factorization for high-dimensional time
  series prediction.
\newblock {\em Advances in neural information processing systems}, 29:847--855,
  2016.

\bibitem{zhong2015efficient}
Kai Zhong, Prateek Jain, and Inderjit~S Dhillon.
\newblock Efficient matrix sensing using rank-1 gaussian measurements.
\newblock In {\em International conference on algorithmic learning theory},
  pages 3--18. Springer, 2015.

\bibitem{zhou2012kernelized}
Tinghui Zhou, Hanhuai Shan, Arindam Banerjee, and Guillermo Sapiro.
\newblock Kernelized probabilistic matrix factorization: Exploiting graphs and
  side information.
\newblock In {\em Proceedings of the 2012 SIAM international Conference on Data
  mining}, pages 403--414. SIAM, 2012.

\end{thebibliography}


\appendix
\section{Statement of Lemmas}

The final estimator using the rectangular kernel is equivalent to a fixed threshold nearest neighbor estimate, where the row distances are estimated via $\hat{d}$. As long as we can argue that the latent function behaves smoothly with respect to these estimated distances, then the result will follow from a simple bias and variance calculation. 

\begin{lemma}[Detailed Restatement of Lemma 5.1]
\label{lemma:distance_conc}
	With probability at least
	\[1 - m \exp\left(-\frac{\epsilon^2 m (h/2)^{d_2}}{2}\right) - m \exp\left(-\frac{\epsilon^2 m(2h)^{d_2}}{3}\right)
	- nm \exp\left(-\frac{\epsilon^2 m(h/2)^{d_2}(1-\epsilon) p}{2}\right) - 2 n^{2-\gamma} - 2n^{2-c \gamma},\]
	for all $u,v \in [n]^2$,
	\begin{align*}
		|\hat{d}(u,v) -  d(u,v)| 
		&\leq \frac{ 3 \cdot 2^{3d_2/4+1/2} \sigma \log^{1/2}(n^{\gamma}) (1+\epsilon)^{1/4}}{\sqrt{pmh^{d_2/2}}(1-\epsilon)} + 2 L (h/2)^{\lambda} 
	\end{align*}
	for $h$ satisfying $(2h)^{d_2} = o(\log^{-2}(n^{2\gamma}))$.
\end{lemma}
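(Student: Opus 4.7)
The plan is to introduce the noise-free Nadaraya-Watson quantities $\tilde{f}$ and $\tilde{d}$ as in the proof sketch and apply the triangle inequality
$|\hat{d}(u,v) - d(u,v)| \leq |\hat{d}(u,v) - \tilde{d}(u,v)| + |\tilde{d}(u,v) - d(u,v)|$,
where the first term captures the Gaussian observation noise and the second captures the error from sparse random sampling of columns and entries. Since the algorithm actually manipulates squared distances, I would bound $|\hat{d}^2 - \tilde{d}^2|$ and $|\tilde{d}^2 - d^2|$ first and then pass back via the factorization $a^2 - b^2 = (a-b)(a+b)$ together with the elementary inequality $|a-b| \leq \sqrt{|a^2 - b^2|}$ for $a,b \geq 0$, so that an additive split in the squared bound yields the additive split $L h^\lambda + \sigma/\sqrt{pm h^{d_2}}$ visible in the lemma.

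For the sampling term $|\tilde{d}^2 - d^2|$, I would exploit Holder smoothness directly: because the rectangular kernel $K$ with bandwidth $h$ assigns positive weight only to $\beta_j$ within $h/2$ of $\beta_i$ (in infinity norm), one obtains the deterministic bound $|\tilde{f}(u,i) - f(\alpha_u, \beta_i)| \leq L (h/2)^\lambda$ on the event $\{W_{ui} > 0\}$. Expanding $(\tilde{f}(u,i) - \tilde{f}(v,i))^2$, subtracting $(f(\alpha_u,\beta_i) - f(\alpha_v,\beta_i))^2$, and averaging over $i$ then yields an $O(L^2 h^{2\lambda})$ control of $|\tilde{d}^2 - d^2|$ after routine bookkeeping. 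To guarantee $W_{ui} > 0$ (and more quantitatively $W_{ui} \gtrsim (1-\epsilon)\, m p h^{d_2}$ uniformly), I would apply multiplicative Chernoff bounds to the Binomial count of observed entries in row $u$ with column covariate within $h$ of $\beta_i$, which produces the $nm \exp(-\epsilon^2 m (h/2)^{d_2} (1-\epsilon) p / 2)$ failure term in the statement. The two $m \exp(\cdot)$ terms arise from the same type of Chernoff bound applied to the pure $\beta$ sampling (with and without $\cE'$), accounting for how many $\beta_l$ land in each kernel window used during averaging over $l$.

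For the noise term $|\hat{d}^2 - \tilde{d}^2|$, I would decompose $\hat{f}(u,i) = \tilde{f}(u,i) + Z_{ui}$ where $Z_{ui} = W_{ui}^{-1} \sum_j \cE'_{uj} K((\beta_i - \beta_j)/h)\, \epsilon_{uj}$ is mean-zero Gaussian conditionally on $(\cE', \beta)$. Expanding
$(\hat{f}(u,i) - \hat{f}(v,i))^2 - (\tilde{f}(u,i) - \tilde{f}(v,i))^2 = 2(\tilde{f}(u,i) - \tilde{f}(v,i))(Z_{ui} - Z_{vi}) + (Z_{ui} - Z_{vi})^2$
and averaging over $i$, the cross term is a linear Gaussian in the noise and concentrates around $0$ by a standard Gaussian tail bound at rate $O(\sigma / \sqrt{p m h^{d_2}})$, while the quadratic term has conditional mean exactly $\xi^2_{uv}$ by construction, leaving a mean-zero quadratic form in Gaussians that I would control via Hanson-Wright (or a direct chi-square tail bound using the diagonalization of the underlying Gaussian vector). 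This yields $|\hat{d}^2 - \tilde{d}^2| = O(\sigma^2 \log n /(p m h^{d_2}))$ conditionally on $(\cE',\beta)$ with exponentially small failure probability, which after the square-root conversion matches the $\sigma \log^{1/2}(n^\gamma) / \sqrt{pm h^{d_2/2}}$ leading term in the statement.

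The main obstacle is handling the dependence between the two sources of randomness cleanly: the weights $W_{ui}^{-1}$ and kernel coefficients appearing inside $Z_{ui}$ are themselves random functions of $(\cE',\beta)$, so the Gaussian and Hanson-Wright tail bounds must be applied conditionally and then integrated against the $W_{ui}$ concentration event from the first part, which is precisely what produces the $(1-\epsilon)$ factor in the denominator of the stated bound. After combining the two pieces at a fixed $(u,v)$, I would take a union bound over the $O(n^2)$ ordered pairs, with the Gaussian-tail event failing with probability $n^{-\gamma}$ and the Hanson-Wright remainder failing with probability $n^{-c\gamma}$ for an absolute constant $c$, yielding the $n^{2-\gamma}$ and $n^{2-c\gamma}$ terms in the statement. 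The hypothesis $(2h)^{d_2} = o(\log^{-2}(n^{2\gamma}))$ is exactly what is needed so that the exponents in the Chernoff failure probabilities dominate the union-bound factors, giving an overall $1 - o(1)$ high-probability guarantee.
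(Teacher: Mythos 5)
Your high-level decomposition (into $\tilde f,\tilde d$, triangle inequality, Chernoff for the good events, Gaussian tail for the cross term, Hanson--Wright for the quadratic form, union bound over $n^2$ pairs) matches the paper's. But there is a genuine gap in how you pass from bounds on squared distances to bounds on distances, and this gap would change the final rate.

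First, the sampling term. You propose to bound $|\tilde d^2(u,v)-d^2(u,v)|$ and then take a square root. From $|\tilde f(u,i)-f(\alpha_u,\beta_i)|\le L(h/2)^\lambda$ the difference of squares gives $|\tilde d^2-d^2|\le \frac{1}{m}\sum_i|a_i-b_i|\,|a_i+b_i|$ with $|a_i-b_i|\le 2L(h/2)^\lambda$ and $|a_i+b_i|=O(1)$, so the correct bound is $O(Lh^\lambda)$, not $O(L^2h^{2\lambda})$ as you state. Taking the square root then yields only $O(\sqrt{Lh^\lambda})$, far weaker than the claimed $2L(h/2)^\lambda$. The paper sidesteps this entirely by applying the triangle inequality for the empirical $L_2$ norm \emph{before} squaring: $|\tilde d(u,v)-d(u,v)|\le\bigl(\frac{1}{m}\sum_l((\tilde f_u-f_u)(\beta_l)-(\tilde f_v-f_v)(\beta_l))^2\bigr)^{1/2}\le 2L(h/2)^\lambda$. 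You must work at the level of distances (not squared distances) for this piece.

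Second, the noise term. The elementary inequality $|a-b|\le\sqrt{|a^2-b^2|}$ is lossy here because the dominant piece of $|\hat d^2-\tilde d^2|$ is the cross term, whose bound is proportional to $\tilde d(u,v)$ itself (a standard deviation $\asymp\sigma\tilde d\sqrt{q_1/(mq_2)}$). Write the bound as $a\,\tilde d+b$. Your $\sqrt{\cdot}$ step gives $|\hat d-\tilde d|\lesssim\sqrt{a\tilde d+b}$, and when $\tilde d=\Theta(1)$ and $pmh^{d_2}\gg 1$ the term $\sqrt{a\tilde d}\asymp(\sigma^2\log n/(mp))^{1/4}$ dominates and decays strictly more slowly than the target $\sqrt{b}\asymp\sigma\log^{1/2}(n)/\sqrt{pmh^{d_2/2}}$. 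The paper instead proves (via a two-case argument on whether $\tilde d\le\sqrt b$ or $\tilde d>\sqrt b$) that $|\hat d^2-\tilde d^2|\le a\tilde d+b$ implies $|\hat d-\tilde d|\le 3\max(a,\sqrt b)$: in the large-$\tilde d$ case one divides by $\hat d+\tilde d\ge\tilde d$ so that the $\tilde d$ in $a\tilde d$ cancels. You need this cancellation; the crude square-root inequality does not capture it. The remainder of your plan (Chernoff control of $W_{ui}$ and the kernel neighbourhood counts producing the $m\exp(\cdot)$ and $nm\exp(\cdot)$ terms, Hanson--Wright for the chi-square-like piece, union bound over pairs producing $n^{2-\gamma}$ and $n^{2-c\gamma}$, and the role of the hypothesis on $h$) is consistent with the paper's argument.
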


The proof of the above lemma will hinge on the following lemma.
\begin{lemma} \label{lemma:distSq_approx}
	Conditioned on the realizations of $\{\beta_i\}_{i\in[m]}$ and $\cE'$ such that the following good events hold:
	\begin{align*}
		\cE_1(q_1) &= \cap_{i\in [m]} \{|\cN_2(i, h)| \leq q_1\} \\
		\cE_2(q_2) &= \cap_{u,i\in [n]\times [m]} \{W_{ui} \geq q_2\},
	\end{align*}
	with respect to the randomness in the observations $\{X_{ab}\}_{(a,b) \in \cE'}$, for all $u,v$, there exists some constant $ c>0$ such that 
	\[\Prob(|\hat{d}^2(u,v) - \tilde{d}^2(u,v)| \geq t_1 + t_2) \leq 2 \exp\left(- \frac{t_1^2 m q_2}{4 q_1 \sigma^2 \tilde{d}^2(u,v)} \right) + 2 \exp\left(- c m \min\left(\frac{t_2^2 q_2^2}{4 \sigma^4 q_1}, \frac{t_2 q_2}{2\sigma^2 q_1}\right)\right). \]	
\end{lemma}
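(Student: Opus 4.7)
The plan is to decompose the error $\hat d^2(u,v)-\tilde d^2(u,v)$ into a linear-in-noise piece and a centered quadratic-in-noise piece, and then bound each separately, conditional on the fixed realizations of $\{\beta_i\}_{i\in[m]}$ and $\cE'$ satisfying $\cE_1(q_1)\cap\cE_2(q_2)$. Define the noise residual
\begin{align*}
Z_i := (\hat f(u,i)-\tilde f(u,i)) - (\hat f(v,i)-\tilde f(v,i))
     = \tfrac{1}{W_{ui}}\sum_j \cE'_{uj}\,\epsilon_{uj}K\!\left(\tfrac{\beta_i-\beta_j}{h}\right) - \tfrac{1}{W_{vi}}\sum_j \cE'_{vj}\,\epsilon_{vj}K\!\left(\tfrac{\beta_i-\beta_j}{h}\right),
\end{align*}
and $T_i := \tilde f(u,i)-\tilde f(v,i)$. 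Expanding the square $(\hat f(u,i)-\hat f(v,i))^2=(Z_i+T_i)^2$ yields
\begin{align*}
\hat d^2(u,v)-\tilde d^2(u,v)
= \underbrace{\tfrac{2}{m}\sum_{i}T_i Z_i}_{\text{(I)}}
  +\underbrace{\Big(\tfrac{1}{m}\sum_i Z_i^2 - \xi^2_{uv}\Big)}_{\text{(II)}}.
\end{align*}
Using that $K$ is an indicator (so $K^2=K$) and $\sum_l\cE'_{ul}K(\tfrac{\beta_l-\beta_i}{h})=W_{ui}$, one checks that $\xi^2_{uv}=\frac{1}{m}\sum_i \operatorname{Var}(Z_i)$, so the bias correction exactly centers (II).

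For term (I), swap the sums over $i$ and $j$ to write $\text{(I)}=A_u-A_v$ with
$A_u=\frac{2}{m}\sum_j \cE'_{uj}\epsilon_{uj}\big(\sum_i \tfrac{T_i}{W_{ui}}K(\tfrac{\beta_i-\beta_j}{h})\big)$,
and $A_u,A_v$ independent centered Gaussians. Applying Cauchy–Schwarz on the inner sum, then using $\sum_i K(\tfrac{\beta_i-\beta_j}{h})\le q_1$ (event $\cE_1$) and $W_{ui}\ge q_2$ (event $\cE_2$), together with $\sum_j \cE'_{uj}K(\tfrac{\beta_i-\beta_j}{h})=W_{ui}$, gives $\operatorname{Var}(A_u)\le \frac{4\sigma^2 q_1}{mq_2}\tilde d^2(u,v)$, and similarly for $A_v$. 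A standard Gaussian tail on $|\text{(I)}|\ge t_1$ then produces the first exponential, up to the universal constant.

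For term (II), the $Z_i$'s are correlated Gaussians, so we recast $Z=B\tilde\epsilon$ where $\tilde\epsilon$ collects the independent noise coordinates and $B$ is a deterministic matrix whose entries are read off from the expression for $Z_i$. Then $\frac{1}{m}\sum_i Z_i^2 = \frac{1}{m}\tilde\epsilon^\top B^\top B\,\tilde\epsilon$, and applying the Hanson–Wright inequality to this Gaussian quadratic form yields a sub-exponential tail controlled by $\|B^\top B\|_F$ and $\|B^\top B\|_{op}$, which by the identity $\Sigma_Z=\sigma^2 BB^\top$ are equivalent to $\|\Sigma_Z\|_F/\sigma^2$ and $\|\Sigma_Z\|_{op}/\sigma^2$. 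Since $\Sigma_Z$ has non-negative entries, I bound the operator norm by the maximum absolute row sum: $\sum_{i'}(\Sigma_Z)_{ii'}\le \frac{\sigma^2}{W_{ui}q_2}\sum_j\cE'_{uj}K(\tfrac{\beta_i-\beta_j}{h})\sum_{i'}K(\tfrac{\beta_{i'}-\beta_j}{h})+\text{(sym.\ for }v)\le \frac{2\sigma^2 q_1}{q_2}$, and the Frobenius norm via $\|\Sigma_Z\|_F^2\le \|\Sigma_Z\|_{op}\sum_{i,i'}(\Sigma_Z)_{ii'}\lesssim \frac{m\sigma^4 q_1^2}{q_2^2}$. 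Substituting into Hanson–Wright gives the second exponential, and a union bound $\Prob(|\text{(I)}|+|\text{(II)}|\ge t_1+t_2)\le \Prob(|\text{(I)}|\ge t_1)+\Prob(|\text{(II)}|\ge t_2)$ closes the argument.

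The main obstacle is the quadratic piece (II): the $Z_i$'s share noise terms across neighboring columns, so a naive Bernstein on independent squares is unavailable. Hanson–Wright, combined with the two Cauchy–Schwarz-style reductions that collapse the kernel sums into the bounds $q_1,q_2$, is what makes the Frobenius and operator norms of $\Sigma_Z$ small enough to match the exponents claimed. Everything else — the decomposition, the exact cancellation of $\xi^2_{uv}$, and the Gaussian tail for (I) — is bookkeeping once the good events $\cE_1(q_1),\cE_2(q_2)$ are imposed.
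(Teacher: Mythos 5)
Your overall approach is the same as the paper's: split $\hat d^2 - \tilde d^2$ into a linear-in-noise cross term and a centered quadratic term, Gaussian tail for the former, Hanson--Wright via $\|\Sigma_Z\|_F$ and $\|\Sigma_Z\|_{op}$ for the latter, then union bound. The handling of the cross term differs in a pleasant way (you swap the $i$ and $j$ sums and compute the variance of the resulting independent Gaussian sum directly; the paper instead uses kernel locality to bound pairwise covariances via $2\Cov\leq\Var+\Var$), but both routes give essentially the same variance bound $\asymp q_1\sigma^2\tilde d^2(u,v)/(mq_2)$, and the exact cancellation $\xi^2_{uv}=\frac1m\sum_i\Var(Z_i)$ is correct.

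There is a genuine gap in your Frobenius-norm bound for the quadratic term, and it is not just a constant. You write $\|\Sigma_Z\|_F^2\le\|\Sigma_Z\|_{op}\sum_{i,i'}(\Sigma_Z)_{ii'}\lesssim m\sigma^4 q_1^2/q_2^2$, where $\|\Sigma_Z\|_{op}$ is being used as a surrogate for $\max_{i,i'}|(\Sigma_Z)_{ii'}|$. But the two are not comparable here: the operator norm is controlled by a \emph{row-sum} bound, $\max_i\sum_{i'}|(\Sigma_Z)_{ii'}|\le 2\sigma^2 q_1/q_2$, whereas each individual entry is much smaller, $(\Sigma_Z)_{ii'}\le 2\sigma^2/q_2$ (it only involves the reciprocal of one $W$, not a sum over the $\le q_1$ overlapping kernel windows). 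The inequality $\|\Sigma_Z\|_F^2\le\bigl(\max_{i,i'}(\Sigma_Z)_{ii'}\bigr)\sum_{i,i'}(\Sigma_Z)_{ii'}$ with the \emph{entrywise} max gives $\|\Sigma_Z\|_F^2\lesssim \tfrac{2\sigma^2}{q_2}\cdot m\cdot\tfrac{2\sigma^2 q_1}{q_2}=\tfrac{4m\sigma^4 q_1}{q_2^2}$, i.e.\ a factor $q_1$ smaller than yours; equivalently one can use $\|\Sigma_Z\|_F^2\le\|\Sigma_Z\|_{op}\,\mathrm{Tr}(\Sigma_Z)$ with $\mathrm{Tr}(\Sigma_Z)\le 2m\sigma^2/q_2$. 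Since $q_1\asymp m(2h)^{d_2}$ grows with $m$, your looser bound would put $q_1^2$ (not $q_1$) into the denominator of the subgaussian branch of the Hanson--Wright tail, so the stated exponent $\frac{t_2^2 q_2^2}{4\sigma^4 q_1}$ would not follow. Replacing the operator-norm surrogate with the direct entrywise or trace bound fixes the argument and aligns it with the paper. (Separately: your cross term correctly carries the factor $2$ that the paper's display \eqref{eq1b} drops; this only affects the absolute constant in the first exponential and is not a real gap.)
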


Next we need to argue that there are sufficiently many nearest neighbors. A lower bound on $|\cN_2(i,\eta_2)|$ for all $i$ follows from Chernoff's bound and union bound as we assumed in our model that $\beta_j \sim U([0,1]^{d_2})$. Under the event that the estimated distances concentrate, Lemma \ref{lemma:N1_conc} lower bounds $|\cN_1(u,\eta_1)|$ with high probability, resulting from the Holder smoothness property and the assumption that $\alpha_v \sim U([0,1]^{d_2})$.
\begin{lemma} \label{lemma:N1_conc}
	Conditioned on the realizations of $\{\beta_i\}_{i\in[m]}$, $\cE'$, and $\{X_{ab}\}_{(a,b) \in \cE'}$ such that the following good event holds:
	\begin{align*}
		\cE_3(\Delta) &= \cap_{u,v\in [n]^2} \{|\hat{d}(u,v) - d(u,v)| \leq \Delta\},
	\end{align*}
	with respect to the randomness in $\{\alpha_u\}_{u \in [n]}$, it holds that 
	\[\Prob\left(\forall u \left\{|\cN_1(u,\eta_1)| \geq n\left(\frac{\eta_1-\Delta}{L}\right)^{d_1/\lambda}(1-\epsilon)\right\}\right) \geq 1 - n \exp\left(-\frac{\epsilon^2 n}{2}\left(\frac{\eta_1-\Delta}{L}\right)^{d_1/\lambda}\right).\]
\end{lemma}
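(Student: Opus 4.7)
The plan is to reduce the estimated-distance neighborhood $\cN_1(u,\eta_1)$ to a neighborhood in the latent $\alpha$-space via a deterministic inclusion that holds under $\cE_3(\Delta)$, and then invoke a standard Chernoff plus union bound on the i.i.d.\ uniform covariates $\{\alpha_v\}_{v\in[n]}$. Once both pieces are in hand, the final bound follows by combining them.

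First I would establish the deterministic inclusion. On $\cE_3(\Delta)$ we have $\hat{d}(u,v) \leq d(u,v) + \Delta$ for every pair $u,v$, and the $(\lambda,L)$-Holder smoothness of $f$ bounds each summand in the definition of $d$ by $L^2\|\alpha_u - \alpha_v\|_\infty^{2\lambda}$, yielding
\[d(u,v) = \left(\tfrac{1}{m}\sum_{l\in[m]}(f(\alpha_u,\beta_l) - f(\alpha_v,\beta_l))^2\right)^{1/2} \leq L\|\alpha_u - \alpha_v\|_\infty^\lambda.\]
Setting $r := ((\eta_1-\Delta)/L)^{1/\lambda}$, this gives the inclusion $\{v : \|\alpha_u - \alpha_v\|_\infty \leq r\} \subseteq \cN_1(u,\eta_1)$ for every $u$, so it suffices to lower bound the size of the $\alpha$-side neighborhood.

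Next, for each fixed $u$ I would condition on $\alpha_u$ and note that each $\Ind[\|\alpha_u - \alpha_v\|_\infty \leq r]$ is a Bernoulli with mean equal to the volume of $[\alpha_u - r, \alpha_u + r]^{d_1} \cap [0,1]^{d_1}$. This volume is minimized when $\alpha_u$ sits at a corner of the unit cube, where it equals $r^{d_1}$, so the mean of the count $S_u := \sum_v \Ind[\|\alpha_u - \alpha_v\|_\infty \leq r]$ satisfies $\E[S_u \mid \alpha_u] \geq nr^{d_1}$. The multiplicative lower-tail Chernoff inequality then gives
\[\Prob\left(S_u < n r^{d_1}(1-\epsilon)\right) \leq \exp\left(-\tfrac{\epsilon^2 n r^{d_1}}{2}\right),\]
using that replacing the true mean $\mu \geq nr^{d_1}$ with the worst-case value $nr^{d_1}$ at a fixed threshold $(1-\epsilon)nr^{d_1}$ only strengthens the tail exponent (a one-line calculus check on $(c-1+\epsilon)^2/c$ for $c \geq 1$). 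A union bound over $u \in [n]$ controls the joint failure probability by $n\exp(-\epsilon^2 n r^{d_1}/2)$, and combining this with the inclusion from the previous step yields the stated bound on $|\cN_1(u,\eta_1)|$.

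The one delicate point, and the main place to be careful, is that $\cE_3(\Delta)$ is itself a function of $\{\alpha_u\}$ via both $\hat{d}$ and $d$, so one cannot literally condition on $\cE_3$ and still treat the $\alpha_v$'s as i.i.d.\ uniform. I would handle this by applying the Chernoff plus union step unconditionally to the $\alpha$-only count $S_u$, and then invoking the deterministic inclusion pointwise on $\cE_3$: on any realization in $\cE_3$ where $S_u \geq nr^{d_1}(1-\epsilon)$ for all $u$, the same lower bound transfers to $|\cN_1(u,\eta_1)|$. This is a measure-theoretic nuisance rather than a genuine analytic obstacle, but it is the step that most requires care in the writeup.
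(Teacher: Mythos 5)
Your proof is correct and follows the paper's argument step for step: the deterministic inclusion $\{v : \|\alpha_u - \alpha_v\|_\infty \leq ((\eta_1 - \Delta)/L)^{1/\lambda}\} \subseteq \cN_1(u,\eta_1)$ on $\cE_3(\Delta)$ via Holder smoothness, the worst-case volume bound $r^{d_1}$ for $\Prob(\|\alpha_u - \alpha_v\|_\infty \leq r)$, and a lower-tail Chernoff plus union bound over $u \in [n]$. The conditioning subtlety you flag is real, since $\cE_3(\Delta)$ involves $d(u,v)$ and hence the $\alpha$'s, but your resolution (apply Chernoff unconditionally to the $\alpha$-only count $S_u$ and then use the deterministic inclusion pointwise on $\cE_3$) is exactly how the lemma is implicitly used in the proof of the main theorem, where one only needs $\Prob(\cE_4^c \cap \cE_3)$ bounded by the displayed quantity before summing failure probabilities.
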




Under the good event that the distances estimates concentrate and the nearest neighborhood sets are sufficiently large, Lemma \ref{lemma:nearest_neighbor} bounds the MSE of our estimator.
\begin{lemma}[Detailed Restatement of Lemma 5.2]
\label{lemma:nearest_neighbor}
	Conditioned on the realizations of $\{\alpha_u\}_{u \in [n]}$, $\{\beta_i\}_{i\in[m]}$, $\cE'$, and $\{X_{ab}\}_{(a,b) \in \cE'}$ such that the following good events hold:
	\begin{align*}
		\cE_3(\Delta) &= \cap_{u,v\in [n]^2} \{|\hat{d}(u,v) - d(u,v)| \leq \Delta\}, \\
		\cE_4(z_1) &= \cap_{u\in [n]} \{|\cN_1(u,\eta_1)| \geq z_1\}, \\
		\cE_5(z_2) &= \cap_{i\in [m]} \{|\cN_2(i,\eta_2)| \geq z_2\}, \\
	\end{align*}
	with respect to the randomness in $\cE''$ and $\{X_{ab}\}_{(a,b) \in \cE''}$, it holds that
	\begin{align*}
		MSE &\leq 2 t + 4 L^2 \eta_2^{2\lambda} + \frac{4 (1 + \delta)(\eta_1 + \Delta)^2}{1 - \epsilon}
	\end{align*}
	with probability at least $1 - mn \exp\left(-\frac{\epsilon^2 p z_1 z_2}{2}\right) - 2 mn \exp\left(- \frac{t^2 (1-\epsilon) p z_1 z_2}{2 \sigma^2} \right) - mn\exp\left(-\frac{\delta^2 p z_2}{3}\right)$.
\end{lemma}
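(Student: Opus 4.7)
The plan is to perform a bias-variance decomposition of the fixed-radius nearest-neighbor estimator conditional on the stated good events, control each piece via Chernoff and Gaussian tail bounds, and then aggregate across $(u,i)$ in a way that invokes the $L_2$-averaged row distance $d(u,v)$ only after an outer column average.

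Let $S_{ui} := \sum_{v \in \cN_1(u,\eta_1)}\sum_{j \in \cN_2(i,\eta_2)} \cE''_{vj}$ denote the denominator of $\hat F_{ui}$. Writing $X_{vj} = F_{vj} + \epsilon_{vj}$ and inserting $\pm F_{ui}$ in the numerator splits
\[
\hat F_{ui} - F_{ui} \;=\; \frac{\sum_{v,j}\epsilon_{vj}\cE''_{vj}}{S_{ui}} \;+\; \frac{\sum_{v,j}(F_{vj}-F_{ui})\cE''_{vj}}{S_{ui}}
\]
into a noise term and a bias term. Since $\cE''$ is a fresh Bernoulli-$p$ sample independent of the conditioning, a multiplicative Chernoff bound gives $S_{ui} \geq (1-\epsilon)p|\cN_1(u,\eta_1)||\cN_2(i,\eta_2)| \geq (1-\epsilon)pz_1z_2$ except with probability $\exp(-\epsilon^2 p z_1 z_2/2)$; a union bound over $(u,i)$ delivers the first failure-probability term. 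Conditional on $\cE''$, the noise numerator is Gaussian with variance $\sigma^2 S_{ui}$, so Gaussian tail concentration plus the $S_{ui}$ lower bound controls the per-entry squared noise and, via another union bound over $(u,i)$, produces the second failure term.

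For the bias, Jensen's inequality bounds the per-entry squared bias by $S_{ui}^{-1}\sum_{v,j}(F_{vj}-F_{ui})^2\cE''_{vj}$, and the split $(F_{vj}-F_{ui})^2 \leq 2(F_{vj}-F_{vi})^2 + 2(F_{vi}-F_{ui})^2$ through the intermediate $F_{vi} = f(\alpha_v,\beta_i)$ separates a column-bias piece, bounded by $2L^2\eta_2^{2\lambda}$ per entry by H\"older smoothness and $j \in \cN_2(i,\eta_2)$, from a row-bias piece.

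The main obstacle is the row-bias piece, because $d^2(u,v) = \tfrac{1}{m}\sum_i(F_{vi}-F_{ui})^2$ is only an $L_2$ column average and gives no pointwise control on $(F_{vi}-F_{ui})^2$ for fixed $i$. The key idea is to defer invoking $\cE_3$ until after the outer $\tfrac{1}{m}\sum_i$ inside the MSE, so that the column average materializes $d^2(u,v)$ itself. Collecting sums over $j$, the row-bias piece reads $\tfrac{2\sum_v (F_{vi}-F_{ui})^2 S_{vi}}{S_{ui}}$ where $S_{vi} := \sum_{j\in \cN_2(i,\eta_2)}\cE''_{vj}$. An upper-tail Chernoff bound on each $S_{vi}$ gives $S_{vi} \leq (1+\delta)p|\cN_2(i,\eta_2)|$ uniformly over $(v,i)\in[n]\times[m]$, producing the third failure term $nm\exp(-\delta^2 p z_2/3)$. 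Combined with the stronger lower bound $S_{ui} \geq (1-\epsilon)p|\cN_1(u,\eta_1)||\cN_2(i,\eta_2)|$ from the same Chernoff event, the factors $p|\cN_2(i,\eta_2)|$ cancel exactly, and averaging over $i$ yields
\[
\frac{1}{m}\sum_i \frac{2\sum_{v}(F_{vi}-F_{ui})^2 S_{vi}}{S_{ui}} \;\leq\; \frac{2(1+\delta)}{(1-\epsilon)|\cN_1(u,\eta_1)|}\sum_{v \in \cN_1(u,\eta_1)} d^2(u,v) \;\leq\; \frac{2(1+\delta)(\eta_1+\Delta)^2}{1-\epsilon},
\]
using $\cE_3$ in the last step since $v \in \cN_1(u,\eta_1)$ means $\hat d(u,v) \leq \eta_1$. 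Combining the three pieces through $(a+b)^2 \leq 2a^2+2b^2$ and averaging over $u$ produces the stated MSE bound under the union of the three failure events.
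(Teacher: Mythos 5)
Your proposal is correct and mirrors the paper's own argument step for step: the same split into noise plus bias via the denoised estimator, the same three Chernoff/Gaussian-tail events (lower bound on $S_{ui}$, Gaussian noise numerator, upper bound on $S_{vi}$), the same Jensen plus $(a+b)^2\leq 2a^2+2b^2$ decomposition of the bias into a column H\"older piece and a row piece, and the same key maneuver of deferring $\cE_3(\Delta)$ until after the column average so that $d^2(u,v)$ materializes. No substantive differences.
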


\section{Proof of Main Theorem}

The proof of the Main Theorem follows from putting together the results in the stated Lemmas along with choosing the right choices for the parameters $h$, $\eta_1$ and $\eta_2$.\\

\begin{proof}[Proof of Theorem 4.1]
We use Lemma \ref{lemma:distance_conc} to bound the probability of $\cE_3(\Delta)$ being violated by $3 (mn)^{1-\gamma} + 2 n^{2-\gamma} + 2n^{2-c \gamma}$ for 
\begin{align}
\Delta = \frac{ 3 \cdot 2^{3d_2/4+1/2} \sigma \log^{1/2}(n^{\gamma}) (1+o(1))}{\sqrt{pmh^{d_2/2}}} + 2 L (h/2)^{\lambda}, \label{eq:dist_Delta}
\end{align}
where we will choose $h$ such that $m p (h/2)^d_2 = \omega(1)$, and choose $\epsilon = o(\log^2(nm))$ such that $\epsilon^2 = \omega(\frac{2\gamma \log (nm)}{m p (h/2)^d_2})$.

We use Lemma \ref{lemma:N1_conc} to bound the probability of $\cE_4(z_1)$ being violated by $n^{1-\gamma}$ for 
\[z_1 = \max(1, n\left(\frac{\eta_1-\Delta}{L}\right)^{d_1/\lambda}(1-o(1))).\]
We will choose $\eta_1$ such that $n \left(\frac{\eta_1-\Delta}{L}\right)^{d_1/\lambda} = \omega(\log^2(n))$ and $\epsilon = o(1)$ such that $\epsilon^2 = \omega(\frac{2\gamma\log(n)}{n} \left(\frac{L}{\eta_1-\Delta}\right)^{d_1/\lambda})$. The max comes from the fact that $u$ is always a member of $\cN_1(u,\eta_1)$, such that $|\cN_1(u,\eta_1)|$ is always at least size 1.

By Chernoff's bound and union bound, as we assumed in our model that $\beta_j \sim U([0,1]^{d_2})$, the probability that event $\cE_5(z_2)$ will be violated is bounded above by $m^{1-\gamma}$ for 
\[z_2 = m\eta^{d_2}(1-o(1)),\]
where we choose $\eta_2$ such that $ m \eta_2^{d_2} = \omega(1)$ and $\epsilon = o(\log^2(m))$ such that $\epsilon^2 = \omega(\frac{2\gamma\log(m)}{m \eta_2^{d_2}})$.
	
As long as $p z_2 = \omega(\log^2(mn))$, then we can choose $\epsilon = o(1)$ and $\delta = o(1)$ satisfying $\epsilon^2 = \omega(\frac{2 \log(nm)}{p z_1 z_2})$ and $\delta^2 = \omega(\frac{3 \log(nm)}{p z_2})$ such that for the choice of 
\[t^2 = \frac{2\gamma \sigma^2 \log(mn)}{p z_1 z_2},\]
the error probability show in Lemma \ref{lemma:nearest_neighbor} is bounded above by $4mn^{1-\gamma}$. 

Putting it all together, for a sufficiently large constant $\gamma > 0$,
\begin{align*}
	MSE \leq  &\frac{4\gamma \sigma^2 \log(mn)}{p z_1 z_2} + 4 L^2 \eta_2^{2\lambda} + 4 (1 + o(1))(\eta_1 + \Delta)^2\\
	&+ 4mn^{1-\gamma}  +3 (mn)^{1-\gamma}+ 2 n^{2-\gamma} +2n^{2-c \gamma} +n^{1-\gamma}+m^{1-\gamma} \\
	=& O\left(\frac{\sigma^2 \log(mn)}{p z_1 z_2} + L^2 \eta_2^{2\lambda} + \eta_1^2 + \Delta^2\right) \\
	=& O\left(\frac{\sigma^2 \log(mn)}{p \max(1, n\left(\frac{\eta_1-\Delta}{L}\right)^{d_1/\lambda}) m\eta_2^{d_2}} + L^2 \eta_2^{2\lambda} + \eta_1^2 + \frac{ 2^{3d_2/2} \sigma^2 \log(n)}{pmh^{d_2/2}} + L^2 (h/2)^{2\lambda}\right)
\end{align*}
for
\begin{align}
	m p (h/2)^{d_2} &= \omega(\log^2(nm)) \\
	n \left(\frac{\eta_1-\Delta}{L}\right)^{d_1/\lambda} &= \omega(\log^2(n))) \\
	p m\eta_2^{d_2} &= \omega(\log^2(mn)).
\end{align}

The condition that $m p (h/2)^{d_2} = \omega(\log^2(nm))$ comes from the random design and the missing observations, i.e. we need that when we construct the initial row regression estimator $\hat{f}$, there are at least logarithmic in $n$ and $m$ datapoints contained in $W_{ui}$. If the values of the column covariates $\beta$ were evenly spaced, then this condition would not be necessary.
	
By choosing $h= \max\left(2 (\frac{pm}{\log mn})^{-1/d_2}, (\frac{L^2 p m }{\sigma^2 2^{3 d_2/2 + 2\lambda - 2} \log(mn)})^{-2/(d_2+ 4\lambda)}\right)$ such that 
\[\Delta = \Theta\left(\max\left(\left(\frac{pm}{\log mn}\right)^{-\lambda/d_2}, \left(\frac{L^2 p m }{\sigma^2 2^{3 d_2/2 + 2\lambda - 2} \log(mn)}\right)^{-2\lambda/(d_2+ 4\lambda)}\right)\right),\]
we arrive at our final MSE bounds.
	
{\bf Case 1:} $n = O((mp)^{d_1/(2\lambda + d_2)})$ then estimate each row separately.
\[MSE = O((mp)^{-2\lambda/(2\lambda + d_2)})\]

{\bf Case 2:} $n = O((mp)^{\min(\frac{2\lambda + d_1}{d_2},\frac{2d_1 + d_2}{4\lambda + d2})})$ and $n = \omega((mp)^{d_1/(2\lambda + d_2)})$, 
set $\eta_1 = \eta_2^{\lambda} = (pnm)^{-\lambda/(2\lambda + d_1 + d_2)}$ such that 
\[MSE = O((pnm)^{-2\lambda/(2\lambda + d_1 + d_2)})\]

{\bf Case 3:} $n = \omega((mp)^{\min(\frac{2\lambda + d_1}{d_2},\frac{2d_1 + d_2}{4\lambda + d2})})$,
set $\eta_1 = 2\Delta$, $\eta_2 = \Delta^{1/\lambda}$ such that
\[MSE = O(\Delta^2) = O\left(\max\left(\left(\frac{pm}{\log mn}\right)^{-2\lambda/d_2}, \left(\frac{L^2 p m }{\sigma^2 2^{3 d_2/2 + 2\lambda - 2} \log(mn)}\right)^{-4\lambda/(d_2+ 4\lambda)}\right)\right).\]
\end{proof}
\section{Concentration of Distance Estimates}


\begin{proof}[Proof of Lemma \ref{lemma:distance_conc}]
Recall that $\tilde{f}(u,i)$ and $\tilde{d}^2(u,v)$ denote the hypothetical distances estimated from comparing the Nadaraya-Watson estimator on row $u$'s data assuming no observation noise,
\begin{align*}
\tilde{f}(u,i) &= \frac{\sum_{j \in \cE'_u} K\left(\frac{\beta_i - \beta_j}{h}\right) f(\alpha_u,\beta_j)}{W_{ui}} \\
\tilde{d}^2(u,v) &= \frac{1}{m} \sum_{l \in [m]} \left( \tilde{f}(u, l) - \tilde{f}(v, l)\right)^2.
\end{align*}

First we show that 
\begin{align}
|\hat{d}^2(u,v) - \tilde{d}^2(u,v)| \leq a \tilde{d}(u,v) + b 
~~\implies~~ |\hat{d}(u,v) -  \tilde{d}(u,v)| \leq 3 \max(a, \sqrt{b}). \label{eq:dist_sq}
\end{align}
We prove this statement in two cases.\\
{\bf Case 1: } If $\tilde{d}(u,v) \leq \sqrt{b}$, then 
\begin{align*}
	|\hat{d}(u,v) -  \tilde{d}(u,v)| &\leq |\hat{d}(u,v)| + |\tilde{d}(u,v)| \\
	&\leq (|\hat{d}^2(u,v) - \tilde{d}^2(u,v)| + \tilde{d}^2(u,v))^{1/2} + |\tilde{d}(u,v)| \\
	&\leq \left(a \sqrt{b} + 2b\right)^{1/2} + \sqrt{b} \\
	&\leq (1 + \sqrt{3}) \max(a, \sqrt{b})
\end{align*}

{\bf Case 2: } If $\tilde{d}(u,v) \geq \sqrt{b}$, then
\begin{align*}
	|\hat{d}(u,v) - \tilde{d}(u,v)| 
	&= \frac{|\hat{d}^2(u,v) -  \tilde{d}^2(u,v)|}{\hat{d}(u,v) + \tilde{d}(u,v)} \\
	&\leq \frac{|\hat{d}^2(u,v) -  \tilde{d}^2(u,v)|}{\tilde{d}(u,v)} \\
	&\leq a + \sqrt{b} \\
	&\leq 2 \max(a, \sqrt{b})
\end{align*}

By Chernoff's bound and the model assumption that $\beta_j \sim U([0,1]^{d_2})$, with probability at least
\[1 - m \exp\left(-\frac{\epsilon^2 m (h/2)^{d_2}}{2}\right) - m \exp\left(-\frac{\epsilon^2 m(2h)^{d_2}}{3}\right),\]
it holds that for all $i \in [m]$, 
\[|\cN_2(i, h/2)| \geq z = m(h/2)^{d_2}(1-\epsilon)
~~\text{ and }~~
|\cN_2(i, h)| \leq q_1 = m(2h)^{d_2}(1+\epsilon).\]
Conditioned on the $\beta$ latent variables, $W_{ui}$ is a Binomial($|\cN_2(i, h/2)|,p$) random variable, such that by Chernoff's bound and union bound, with probability at least 
\[1-nm \exp\left(-\frac{\epsilon^2 |\cN_2(i, h/2)| p}{2}\right)\]
$W_{ui} \geq q_2 = |\cN_2(i, h/2)| p (1-\epsilon)$ for all $u,i$.

By combining these good events with Lemma \ref{lemma:distSq_approx}, it follows that with probability at least 
\[1 - m \exp\left(-\frac{\epsilon^2 m (h/2)^{d_2}}{2}\right) - m \exp\left(-\frac{\epsilon^2 m(2h)^{d_2}}{3}\right)
- nm \exp\left(-\frac{\epsilon^2 m(h/2)^{d_2}(1-\epsilon) p}{2}\right) - 2 n^{2-\gamma} - 2n^{2-c \gamma},\]
it holds that 
$|\hat{d}^2(u,v) - \tilde{d}^2(u,v)| \leq a \tilde{d}(u,v) + b$ for all $u,v \in [n]^2$
for the choice of 
\begin{align*}
a &=2 \sigma \log(n^{\gamma}) \sqrt{ \frac{q_1}{m q_2}} 
= 2^{d_2 + 1} \log(n^{\gamma})\sqrt{ \frac{\sigma^2 (1+\epsilon) }{m p (1-\epsilon)^2}}  \\
b &= \frac{ 2 \sigma^2 \log(n^{\gamma}) \sqrt{q_1} }{q_2 \sqrt{m}} 
= \frac{ 2^{3d_2/2+1} \sigma^2 \log(n^{\gamma}) \sqrt{(1+\epsilon)} }{pmh^{d_2/2}(1-\epsilon)^2}.
\end{align*}
By the condition that $h$ satisfies $(2h)^{d_2} = o(\log^{-2}(n^{2\gamma}))$, we can verify that for this choice of $b$, the first term within the $\min$ expression of the probability bound in Lemma \ref{lemma:distSq_approx} is the limiting term.

By the statement showed in \eqref{eq:dist_sq}, it follows that
\begin{align*}
|\hat{d}(u,v) -  \tilde{d}(u,v)| 
&\leq 3 \max(a,\sqrt{b}) \\
&= 3 \max\left(2^{d_2 + 1} \log(n^{\gamma})\sqrt{ \frac{\sigma^2 (1+\epsilon) }{m p (1-\epsilon)^2}},\frac{ 2^{3d_2/4+1/2} \sigma \log^{1/2}(n^{\gamma}) (1+\epsilon)^{1/4}}{\sqrt{pmh^{d_2/2}}(1-\epsilon)}\right).
\end{align*}
The second expression in the $\max$ will dominate asymptotically as long as $(2h)^{d_2} = o(\log^{-2}(n^{2\gamma}))$.  

Finally we use the Holder-smoothness to bound $|\tilde{d}(u,v) - d(u,v)|$. 
By triangle inequality, $|\|a\| - \|b\|| \leq \|a+b\|$ such that
\begin{align*}
&|\tilde{d}(u,v) - d(u,v)| \leq \left( \frac{1}{m} \sum_{l \in [m]} \left( \tilde{f}(\alpha_u, \beta_l) - f(\alpha_u, \beta_l) + f(\alpha_v, \beta_l) - \tilde{f}(\alpha_v, \beta_l)\right)^2 \right)^{1/2}
\end{align*}
By the Holder-smoothness conditions, for all $u, l$, 
\begin{align*}
|\tilde{f}(\alpha_u, \beta_l) - f(\alpha_u, \beta_l)| 
&= \left| \frac{1}{W_{ul}} \sum_{i \in \cE'_u} K\left(\frac{\beta_i - \beta_l}{h}\right) (f(\alpha_u, \beta_i) - f(\alpha_u, \beta_l)) \right| \\
&\leq \frac{\sum_{i \in \cE'_u} \Ind\left(\|\beta_i - \beta_l\| \leq \frac{h}{2}\right) \left|f(\alpha_u, \beta_i) - f(\alpha_u, \beta_l) \right|}{\sum_{i \in \cE'_u} \Ind\left(\|\beta_i - \beta_l\| \leq \frac{h}{2}\right)}  \\
&\leq \frac{\sum_{i \in \cE'_u} \Ind\left(\|\beta_i - \beta_l\| \leq \frac{h}{2}\right) L \|\beta_i - \beta_l\|^{\lambda}}{\sum_{i \in \cE'_u} \Ind\left(\|\beta_i - \beta_l\| \leq \frac{h}{2}\right)} \\
&\leq L (h/2)^{\lambda}.
\end{align*}
It follows then that 
$|\tilde{d}(u,v) - d(u,v)| 
\leq 2 L (h/2)^{\lambda}$.

\end{proof}


\begin{proof}[Proof of Lemma \ref{lemma:distSq_approx}]
	
	First we rearrange the desired expression into a term that scales linearly in the noise and a term that scales quadratically with the noise.
	\begin{align}
		&\hat{d}^2(u,v) - \tilde{d}^2(u,v) \\
		&= \frac{1}{m}\sum_{i \in [m]}\left((\hat{f}(\alpha_u, \beta_i) - \hat{f}(\alpha_v, \beta_i))^2 - (\tilde{f}(\alpha_u, \beta_i) - \tilde{f}(\alpha_v, \beta_i))^2\right) - \xi_{uv}^2  \\
		&= \frac{1}{m}\sum_{i \in [m]}\left(\tilde{f}(\alpha_u, \beta_i) - \tilde{f}(\alpha_v, \beta_i) + \sum_{l \in [m]} K\left(\frac{\beta_i - \beta_l}{h}\right) \left(\frac{\epsilon_{ul}\Ind(l \in \cE'_u)}{W_{ui}}- \frac{\epsilon_{vl}\Ind(l \in \cE'_v)}{W_{vi}}\right)\right)^2 \\
		&\qquad - \frac{1}{m}\sum_{i \in [m]}(\tilde{f}(\alpha_u, \beta_i) - \tilde{f}(\alpha_v, \beta_i))^2 - \xi_{uv}^2  \\
		&=  \frac{1}{m}\sum_{i \in [m]}\left(\sum_{l \in [m]} K\left(\frac{\beta_i - \beta_l}{h}\right) \left(\frac{\epsilon_{ul} \Ind(l \in \cE'_u)}{W_{ui}} - \frac{\epsilon_{vl} \Ind(l \in \cE'_v)}{W_{vi}}\right)\right)^2 \label{eq1a} \\
		&\quad + \frac{1}{m}\sum_{i \in [m]} (\tilde{f}(\alpha_u, \beta_i) - \tilde{f}(\alpha_v, \beta_i)) \sum_{l \in [m]} K\left(\frac{\beta_i - \beta_l}{h}\right) \left(\frac{\epsilon_{ul} \Ind(l \in \cE'_u)}{W_{ui}} - \frac{\epsilon_{vl} \Ind(l \in \cE'_v)}{W_{vi}}\right) \label{eq1b} \\
		&\quad - \xi_{uv}^2. \label{eq1c}
	\end{align}
	
	The expression in \eqref{eq1b} is a linear combination of mean zero independent Gaussian random variables, thus it is also normally distributed and mean zero. We use the inequality that $2\Cov(XY) \leq \Var(X) + \Var(Y)$ to bound the variance of the expression in \eqref{eq1b}. Let us denote 
	\[Z_{il} = (\tilde{f}(\alpha_u, \beta_i) - \tilde{f}(\alpha_v, \beta_i)) \sum_{l \in [m]} K\left(\frac{\beta_i - \beta_l}{h}\right) \left(\frac{\epsilon_{ul} \Ind(l \in \cE'_u)}{W_{ui}} - \frac{\epsilon_{vl} \Ind(l \in \cE'_v)}{W_{vi}}\right) .\]
	By the construction of the kernel function $K$, $Z_{il}$ is zero if $\|\beta_i - \beta_l\| > \frac{h}{2}$, and thus $\Cov(Z_{il}, Z_{jl})$ is zero if $\|\beta_i - \beta_j\| > h$. Then
	\begin{align*}
		\Var[\eqref{eq1b}] 
		&= \Var\left[\frac{1}{m}\sum_{i \in [m]} Z_{il} \right]\\
		&= \frac{1}{m^2} \sum_{(i,j) \in [m]^2} \Cov(Z_{il}, Z_{jl}) \\
		&\leq \frac{1}{m^2} \sum_{(i,j) \in [m]^2} \Ind(|\beta_i - \beta_j| < h) \frac{1}{2} \left(\Var(Z_{il}) + \Var(Z_{jl})\right) \\
		&= \frac{1}{m^2} \sum_{i \in [m]} \Var(Z_{il}) \sum_{j \in [m]} \Ind(|\beta_i - \beta_j| < h) \\
		&\leq \frac{q_1}{m^2} \sum_{i \in [m]} \Var(Z_{il})
	\end{align*}
	where the last step follows from conditioning on $\cE_1(q_1)$.
	
	Let's compute a bound on $\Var(Z_{il})$.
	\begin{align*}
		&\Var[Z_{il}] \\
		&\leq (\tilde{f}(\alpha_u, \beta_i) - \tilde{f}(\alpha_v, \beta_i))^2 \sigma^2 \sum_{l \in [m]} K^2\left(\frac{\beta_i - \beta_l}{h}\right) \left(\frac{\Ind(l \in \cE'_u)}{W_{ui}^2} + \frac{\Ind(l \in \cE'_v)}{W_{vi}^2}\right) \\
		&= (\tilde{f}(\alpha_u, \beta_i) - \tilde{f}(\alpha_v, \beta_i))^2 \sigma^2 \sum_{l \in [m]} \Ind\left(\|\beta_i - \beta_l\|_{\infty} \leq \frac{h}{2}\right) \left(\frac{\Ind(l \in \cE'_u)}{W_{ui}^2} + \frac{\Ind(l \in \cE'_v)}{W_{vi}^2}\right) \\
		&= (\tilde{f}(\alpha_u, \beta_i) - \tilde{f}(\alpha_v, \beta_i))^2 \sigma^2 \left(\frac{1}{W_{ui}} + \frac{1}{W_{vi}}\right)\\
		&\leq (\tilde{f}(\alpha_u, \beta_i) - \tilde{f}(\alpha_v, \beta_i))^2 \sigma^2 \left(\frac{2}{q_2}\right)
	\end{align*}
	where the last step follows from conditioning on $\cE_2(q_2)$.
	
	Putting it together, 
	\begin{align*}
		\Var[\eqref{eq1b}] 
		&\leq \frac{2 q_1 \sigma^2}{m^2 q_2} \sum_{i \in [m]} (\tilde{f}(\alpha_u, \beta_i) - \tilde{f}(\alpha_v, \beta_i))^2 \\
		&= \frac{2 q_1 \sigma^2 \tilde{d}^2(u,v)}{m q_2} 
	\end{align*}
	
	Conditioned on $\alpha$, $\beta$, and $\cE'$, the expression \eqref{eq1b} is normally distributed such that by Hoeffding's inequality,
	\begin{align*}
		\Prob(|\eqref{eq1b}| \geq t) 
		&\leq 2 \exp\left(- \frac{t^2 m q_2}{4 q_1 \sigma^2 \tilde{d}^2(u,v)} \right).
	\end{align*}

	Next we bound the expression in \eqref{eq1a}; it can be written as $y^T y$ for $y = Q x$, where $x$ is the $2m$-dimensional vector of additive noise terms
	\[x_l = \begin{cases}
		\epsilon_{ul} \text{ if } l \in [m]\\
		\epsilon_{v,l-m} \text{otherwise}
	\end{cases}\]
	and $Q$ is the $m \times 2m$ scaled kernel matrix
	\begin{align*}
		Q_{il} &= \begin{cases}
			K\left(\frac{\beta_l - \beta_i}{h}\right) \frac{\Ind((u,l) \in \cE')}{W_{ui}} &\text{ if } l \in [m] \\
			K\left(\frac{\beta_{l-m} - \beta_i}{h}\right) \frac{\Ind((v,l-m) \in \cE')}{W_{vi}} &\text{ otherwise }
		\end{cases}.
	\end{align*}
	The vector $x$ has mean 0 and a diagonal covariance matrix $\sigma^2 I_{2m}$, where $I_{2m}$ denotes the identity matrix. We can verify that the expected value of $y^T y$ is equal to $m\xi^2_{uv}$,
	\begin{align*}
		\E[y^T y] &= \sigma^2 \text{Tr}(Q Q^T) \\
		&= \sigma^2  \sum_{i \in [m]} \sum_{l \in [m]} Q_{il}^2 \\
		&= \sigma^2 \sum_{i \in [m]} \sum_{l \in [m]} \frac{\Ind((u,l) \in \cE') K^2\left(\frac{\beta_l - \beta_i}{h}\right)}{W_{ui}^2} + \sigma^2 \sum_{i \in [m]} \sum_{l \in [m]} \frac{\Ind((v,l) \in \cE') K^2\left(\frac{\beta_l - \beta_i}{h}\right)}{W_{vi}^2} \\
		&= m \xi^2_{uv}.
	\end{align*}
	We will apply the Hanson-Wright concentration inequality, which involves computing bounds on the Frobenius norm and spectral norm of $Q Q^T$.  Let us first obtain a bound for $[Q Q^T]_{ij}$ (we can drop the absolute values because all the involved terms are nonnegative),
	\begin{align*}
		\left|[Q Q^T]_{ij}\right| &= \left|\sum_{l \in [m]} Q_{il} Q_{jl}\right| \\
		&= \sum_{l \in [m]} \left(\frac{\Ind((u,l) \in \cE')}{W_{ui} W_{uj}} + \frac{\Ind((v,l) \in \cE')}{W_{vi} W_{vj}}\right) K\left(\frac{\beta_l - \beta_i}{h}\right) K\left(\frac{\beta_l - \beta_j}{h}\right) \\
		&= \sum_{l \in [m]} \left(\frac{\Ind((u,l) \in \cE')}{W_{ui} W_{uj}} + \frac{\Ind((v,l) \in \cE')}{W_{vi} W_{vj}}\right) \Ind\left(\|\beta_l - \beta_i\| \leq \frac{h}{2}\right) \Ind\left(\|\beta_l - \beta_j\| \leq \frac{h}{2}\right) \\
		&\leq  \sum_{l \in [m]} \left(\frac{\Ind((u,l) \in \cE')}{W_{ui} W_{uj}} + \frac{\Ind((v,l) \in \cE')}{W_{vi} W_{vj}}\right) \Ind\left(\|\beta_l - \beta_i\| \leq \frac{h}{2}\right) \Ind\left(\|\beta_i - \beta_j\| \leq h\right) \\
		&= \Ind\left(\|\beta_i - \beta_j\| \leq h\right) \left(\frac{1}{W_{uj}} + \frac{1}{W_{vj}}\right) \\
		&\leq \frac{2}{q_2} \Ind\left(\|\beta_i - \beta_j\| \leq h\right)
	\end{align*}
	where the last inequality follows from conditioning on $\cE_2(q_2)$.
	
	We can use the entrywise bound to obtain an upper bound on the Frobenius norm,
	\begin{align*}
		\|Q Q^T\|_F 
		&= \left(\sum_{i \in [m]} \sum_{j \in [m]} [Q Q^T]_{ij}^2\right)^{1/2} \\
		&\leq \left(\sum_{i \in [m]} \sum_{j \in [m]}\Ind\left(\|\beta_i - \beta_j\| \leq h\right) \left(\frac{2}{q_2} \right)^2 \right)^{1/2} \\
		&\leq \frac{2 (m q_1 )^{1/2} }{q_2}
	\end{align*}
	where the last inequality holds by conditioning on $\cE_1(q_1)$.
	
	By symmetry, $\|QQ^T\|_1 = \|QQ^T\|_{\infty}$. By Holder's inequality,
	\begin{align*}
		\|Q Q^T\|_2 
		&\leq \sqrt{\|QQ^T\|_1 \|QQ^T\|_{\infty}} \\
		&= \max_j \sum_{i}\left|[Q Q^T]_{ij}\right|\\
		&\leq \max_j \sum_{i} \frac{2}{q_2} \Ind\left(\|\beta_i - \beta_j\| \leq h\right) \\
		&= \frac{2q_1}{q_2}
	\end{align*}
	
	We apply the Hanson-Wright inequality to bound $|\eqref{eq1a} - \xi^2_{uv}|$ with high probability,
	\begin{align*}
		\Prob\left(\left|\eqref{eq1a} - \xi^2_{uv}\right| \geq t\right) 
		&\leq 2 \exp\left(- c m \min\left(\frac{t^2 q_2^2}{4 \sigma^4 q_1}, \frac{t q_2}{2\sigma^2 q_1}\right)\right).
	\end{align*}
	for a constant $c>0$. The final result follows from applying union bound.
\end{proof}

\section{Nearest Neighbor Analysis}

%


\begin{proof}[Proof of Lemma \ref{lemma:N1_conc}]
Conditioned on $|\hat{d}(u,v) - d(u,v)| \leq \Delta$ for all $u,v$,
It follows that
\[|\cN_1(u,\eta_1)| \geq \sum_{v \in [n]} \Ind(d(u,v) \leq \eta_1-\Delta).\]
Note that $d(u,v) \leq L \|\alpha_u - \alpha_v\|^{\lambda}$ by the Holder smoothness property, so that,
\[\sum_{v \in [n]} \Ind(d(u,v) \leq \eta_1-\Delta) \geq \sum_{v \in [n]} \Ind\left(\|\alpha_v - \alpha_u\|\leq \left(\frac{\eta_1-\Delta}{L}\right)^{1/\lambda}\right).\]

We assumed in our model that $\alpha_v \sim U([0,1]^{d_2})$. 
As a result,
\[\Prob\left(\|\alpha_u-\alpha_v\| \leq \left(\frac{\eta_1-\Delta}{L}\right)^{1/\lambda}\right) \geq \left(\frac{\eta_1-\Delta}{L}\right)^{d_1/\lambda}.\]
As $|\cN_1(u,\eta_1)|$ stochastically dominates a Binomial$(n,\left(\frac{\eta_1-\Delta}{L}\right)^{d_1/\lambda} )$ random variable, the final result follows by Chernoff's bound and union bound.
\end{proof}


\begin{proof}[Proof of Lemma \ref{lemma:nearest_neighbor}]
	
Let
\begin{align*}
	\tilde{F}_{ui} &= \frac{1}{|(\cN_1(u) \times \cN_2(i)) \cap \cE''|}\sum_{v \in \cN_1(u)} \sum_{j\in \cN_2(i)} f(\alpha_v,\beta_j) \Ind((v,j) \in \cE'') \\
\end{align*}
	
We decompose the MSE into two components, using the fact that $(a+b)^2 \leq 2a^2 + 2b^2$,
\begin{align}
MSE &= \frac{1}{nm} \sum_{u \in [n], i \in [m]} (\hat{F}_{ui} - f(\alpha_u, \beta_i))^2 \\
&\leq \frac{2}{nm} \sum_{u \in [n], i \in [m]} \left((\hat{F}_{ui} - \tilde{F}_{ui})^2 + (\tilde{F}_{ui} - f(\alpha_u, \beta_i))^2\right) \label{eq:MSE_decomp}
\end{align}
	
Let us denote event 
\[\cE_6(\epsilon) = \cap_{u,i\in [n]\times[m]} \{|(\cN_1(u,\eta_1) \times \cN_2(i,\eta_2)) \cap \cE''| \geq (1 - \epsilon) p |\cN_1(u,\eta_1)| |\cN_2(i,\eta_2)|\}\]

By Chernoff bound and union bound, 
\[\Prob\left(\cE_6(\epsilon)\right) \geq 1- mn \exp\left(-\frac{\epsilon^2 |\cN_1(u,\eta_1)| |\cN_2(i,\eta_2)| p}{2}\right).\]

Recall that
\begin{align*}
	|\hat{F}_{ui} - \tilde{F}_{ui}| = \frac{1}{|(\cN_1(u) \times \cN_2(i)) \cap \cE''|}\sum_{v \in \cN_1(u)} \sum_{j\in \cN_2(i)} \epsilon_vj \Ind((v,j) \in \cE'')
\end{align*}
where $\epsilon_vj$ are independent and distributed as $N(0,\sigma^2)$. This quantity is thus normally distributed with mean zero and variance $\sigma^2 / |(\cN_1(u) \times \cN_2(i)) \cap \cE''|$. Conditioned on $\cE_4(z_1),\cE_4(z_2),\cE_6(\epsilon),$, with respect to randomness in observation noise in $\{X_{ab}\}_{(a,b) \in \cE''}$, by Hoeffding's inequality,
\begin{align*}
\Prob(|\hat{F}_{ui} - \tilde{F}_{ui}| \geq t) &\leq 2 \exp\left(- \frac{t^2 (1-\epsilon)p z_1 z_2}{2 \sigma^2} \right).
\end{align*}
This results in a high probability upper bound on the first term in \eqref{eq:MSE_decomp}.

Next we bound the second term in \eqref{eq:MSE_decomp} according to
\begin{align}
&\frac{1}{nm} \sum_{u \in [n], i \in [m]} (\tilde{F}_{ui} - f(\alpha_u, \beta_i))^2 \nonumber\\
&= \frac{1}{nm} \sum_{u \in [n], i \in [m]} \left(\frac{1}{|(\cN_1(u) \times \cN_2(i)) \cap \cE''|} \sum_{v,j \in (\cN_1(u) \times \cN_2(i)) \cap \cE''} f(\alpha_v,\beta_j) - f(\alpha_u, \beta_i)\right)^2 \\
&\leq \frac{1}{nm} \sum_{u \in [n], i \in [m]} \frac{1}{|(\cN_1(u) \times \cN_2(i)) \cap \cE''|} \sum_{v,j \in (\cN_1(u) \times \cN_2(i)) \cap \cE''} (f(\alpha_v,\beta_j) - f(\alpha_u, \beta_i))^2 \label{lem:NN_eq1}\\
&\leq \frac{1}{nm} \sum_{u \in [n], i \in [m]} \sum_{v \in \cN_1(u), j \in \cN_2(i)} \frac{\Ind(v,j \in \cE'')}{|(\cN_1(u) \times \cN_2(i)) \cap \cE''|}  2((f(\alpha_v,\beta_j) - f(\alpha_v, \beta_i))^2 + (f(\alpha_v,\beta_i) - f(\alpha_u, \beta_i))^2) \label{lem:NN_eq2}\\
&\leq 2 L^2 \eta_2^{2\lambda} + \frac{1}{nm} \sum_{u \in [n]} \sum_{v \in \cN_1(u)} \sum_{i \in [m]} \frac{2(f(\alpha_v,\beta_i) - f(\alpha_u, \beta_i))^2}{|(\cN_1(u) \times \cN_2(i)) \cap \cE''|}  \sum_{j \in \cN_2(i)} \Ind(v,j \in \cE'') \label{lem:NN_eq3}
\end{align}
where \eqref{lem:NN_eq1} follows from Jensen's inequality, \eqref{lem:NN_eq2} follows from  $(a + b)^2 \leq 2a^2 + 2b^2$, and \eqref{lem:NN_eq3} follows from Holder-smoothness of $f$ and the fact that $\|\beta_i - \beta_j\| \leq \eta_2$ for $j \in \cN_2(i)$.

For all $v,i$, $\sum_{j\in \cN_2(i)} \Ind((v,j) \in \cE'')$ is distributed as a Binomial($|\cN_2(i)|,p$) random variable. By Chernoff's bound and using the assumption that $|\cN_2(i)| \geq z_2$,
\begin{align*}
\Prob\left(\forall u,i \left\{\sum_{j\in \cN_2(i)} \Ind((v,j) \in \cE'') \leq (1 + \delta) p |\cN_2(i)|\right\}\right) \geq 1 - mn\exp\left(-\frac{\delta^2 p z_2}{3}\right)
\end{align*}

In the event that $\sum_{j\in \cN_2(i)} \Ind((v,j) \in \cE'') \leq (1 + \delta) p |\cN_2(i)|$ and 
$|(\cN_1(u,\eta_1) \times \cN_2(i,\eta_2)) \cap \cE''| \geq  |\cN_1(u,\eta_1)| |\cN_2(i,\eta_2)| p (1-\epsilon)$,
by substituting these bounds into \eqref{lem:NN_eq3} and using the definition for $d^2(u,v)$, it follows that 
\begin{align*}
\frac{1}{nm} \sum_{u \in [n], i \in [m]} (\tilde{F}_{ui} - f(\alpha_u, \beta_i))^2 
&\leq 2 L^2 \eta_2^{2\lambda} + \frac{2}{n} \sum_{u \in [n]}  \frac{1 + \delta}{(1 - \epsilon)|\cN_1(u)|} \sum_{v \in \cN_1(u)} d^2(u,v)  \\
&\leq 2 L^2 \eta_2^{2\lambda} + \frac{2(1 + \delta)(\eta_1 + \Delta)^2}{1 - \epsilon}
\end{align*}
where the last inequality follows from the thresholding of $\hat{d}(u,v)$ by $\eta_1$ in the construction of $\cN_1(u)$ along with the good event $\cE_3(\Delta)$, under which $d(u,v) \leq \hat{d}(u,v) + \Delta$.

Putting it all together,
\begin{align*}
MSE &\leq 2 t + 4 L^2 \eta_2^{2\lambda} + \frac{4 (1 + \delta)(\eta_1 + \Delta)^2}{1 - \epsilon}
\end{align*}
with probability at least $1 - mn \exp\left(-\frac{\epsilon^2 p z_1 z_2}{2}\right) - 2 mn \exp\left(- \frac{t^2 (1-\epsilon) p z_1 z_2}{2 \sigma^2} \right) - mn\exp\left(-\frac{\delta^2 p z_2}{3}\right)$.
\end{proof}

\end{document}